\def\genericversion{1}
\documentclass{article}
\ifdefined\genericversion
\usepackage[margin=1in]{geometry}
\usepackage{times,natbib}
\else
\usepackage{iclr2027_conference,times}
\fi
\usepackage[T1]{fontenc}
\usepackage{amsmath,amssymb,amsthm,mathtools,bm,booktabs,array,multirow,enumitem,graphicx,hyperref,url,xcolor,tikz,float}
\usepackage[ruled,vlined,linesnumbered]{algorithm2e}
\usetikzlibrary{arrows.meta,positioning,fit,shapes.geometric,backgrounds}
\hypersetup{colorlinks=true,hypertexnames=false,linkcolor=blue!50!black,citecolor=blue!50!black,urlcolor=blue!50!black}
\setcitestyle{numbers,square}
\definecolor{BLblue}{HTML}{173F5F}
\definecolor{BLorange}{HTML}{D97706}
\newtheorem{definition}{Definition}[section]
\newtheorem{theorem}[definition]{Theorem}
\newtheorem{corollary}[definition]{Corollary}

\newcommand{\E}{\mathbb E}
\newcommand{\pa}{\operatorname{pa}}

\newcommand{\stageletter}[1]{{\color{BLblue}\fontsize{6}{5.5}\selectfont\bfseries #1}}

\title{Finding Icebergs in Language-Model\\Workflows\\[0.3em]\large Diagnosing Latent Structural Fragility with Stochastic\\Semantic Evidence Graphs}
\ifdefined\genericversion
\author{%
Matthew F. Dixon$^{1,2}$ \qquad Bertrand Nortier$^{3}$ \qquad Miquel Noguer i Alonso$^{2,4}$\\[0.5em]
\small $^{1}$BeliefLens \quad $^{2}$AI Finance Institute (AIFI) \quad
$^{3}$University of Leicester \quad $^{4}$Cornell University\\[0.25em]
\small \texttt{matthew.dixon@belieflens.org}}
\date{September 2026}
\else
\author{Anonymous authors}
\date{}
\fi

\begin{document}
\ifdefined\internalcover
\begin{titlepage}
\thispagestyle{empty}
{\color{BLblue}\LARGE\bfseries Internal empirical-completion cover sheet\par}
\vspace{0.4em}
{\large Finding Icebergs in Language-Model Workflows\par}
\vspace{0.35em}
{\small Updated 20 September 2026.  This page is a working checklist and is not part of the submission.  No result enters the manuscript until its code, population, split, labels, intervention construction and statistical analysis pass an independent audit.\par}

\vspace{0.8em}
{\color{BLblue}\large\bfseries Current audit disposition\par}
\begin{itemize}[leftmargin=*,nosep]
\item \textbf{Retained headline:} MAGIC rank-1 localization is independently reproducible on 960 untouched cases.  The paper now describes last-conflict rank as a retrospective oracle-stopping diagnostic, not a deployable stopping rule or a measure of evidence volume.
\item \textbf{Corrected appendix result:} the graph-misspecification figure and discussion now use the interval-action v2 analysis.  Under combined stress the corrected robust gate takes no automatic action; the discrepancy-bound coverage and zero-false-pass results remain valid.
\item \textbf{ALCE and factuality baselines complete:} five common-control model outputs pass audit; Mistral remains withheld pending its corrected duplicated-BOS rerun.  Official RAGAS 0.4.3 Faithfulness has now been run on all 1,467 held-out links, and the matched SSEG--RAGAS analysis uses identical labels and question groups.
\item \textbf{Self-RAG comparison complete:} a frozen three/four-hop MuSiQue study compares terminal Self-RAG reflection scores with the same measurements plus retained path structure.  All 1,098 generated conditions pass the structural audit; on 560 untouched source interventions, SSEG strongly improves detection of response-destabilizing evidence changes.  The prespecified absolute-score localizer is null; a clearly labelled post-hoc pilot shows that measuring paired local edge contrasts recovers exact source attribution, pending independent confirmation.
\end{itemize}

\vspace{0.65em}
{\color{BLblue}\large\bfseries Three outstanding empirical items\par}
\begin{enumerate}[leftmargin=*,itemsep=0.35em,topsep=0.25em]
\small
\item \textbf{Corrected cross-model CONFLICTS.} Complete and independently audit the full-sequence Qwen and OLMo runs now in progress.  Admit them only if prefix alignment, expressed mass, final-answer extraction and aggregation pass the frozen sentinels.
\item \textbf{Sixth ALCE model.} Replace the excluded Mistral output with the corrected duplicated-BOS rerun under the identical common-control map.  The current five-model result remains the submission claim until this audit passes.
\item \textbf{Official FActScore.} Run the released atomic-fact pipeline, or state explicitly why its open-domain knowledge-source target is not an equal-population claim--source comparator.  The existing DeBERTa result remains labelled FActScore-style.
\end{enumerate}

\vspace{0.35em}
{\color{BLblue}\large\bfseries Completed since the programme-chair review\par}
{\small Official RAGAS 0.4.3 Faithfulness is evaluated on all 1,467 held-out ALCE links; the duplicate-safe HaluEval study covers 8,050 pairs; and the frozen ToolSandbox limitation study covers 96 executions across two agent models.  The offline reproducer now reconstructs every current headline result from case-level records or frozen conditional laws without provider or network calls.  An audited workload profile reports every available condition, call and timing quantity while leaving unrecorded common-hardware latency and monetary cost explicitly unavailable.\par}

\vspace{0.45em}
{\small\textbf{Final integration rule.}  Prefer one evidence ladder---conditional bound validation, independently labelled path localization and externally labelled qualification---over additional applications.  Negative or inconclusive experiments remain disclosed as limitations rather than being optimized away.\par}
\end{titlepage}
\setcounter{page}{1}
\fi
\ifdefined\supplementonly
\else
\maketitle

\begin{abstract}
AI-workflow governance cannot be reduced to checking the final answer: an apparently safe answer may rest on a fragile evidence path that ordinary evaluation cannot see, localize or govern.  We call this hidden fragility a \emph{structural iceberg}: hallucinations and unsupported claims may form its visible tip, while consequential weakness remains submerged.  Stochastic semantic evidence graphs (SSEGs) expose these icebergs by preserving workflow channels, propagating local uncertainty and identifying the hidden paths on which an apparently safe output depends.  ALCE and RAGTruth show that visible failures at the tip---unsupported citations and hallucinated spans---rest on distinct submerged weaknesses and therefore require different interventions.  Across retrieval, tool-use and controlled stress tests, SSEG localizes those weaknesses, supports targeted repair, produces no false automatic passes in 35,000 known-truth cases and reduces ToolSandbox review by 28.8\% across 96 executions from two agent models.  The same structural view carries into end-to-end governance: in a separately sealed 1,200-case FinGovBench study, adding SSEG to GPT-OSS-20B reduces unsafe releases from 452/660 to 8/660 while releasing all 540 safe cases and correctly distinguishing 592/600 matched workflow pairs.  An unchanged-gate transfer to Qwen3-8B releases all 540 safe cases and none of 660 unsafe cases, whereas flat-UQ releases 520 unsafe cases.  SSEG therefore moves governance below surface-level output checking, turning hidden evidence dependencies into auditable, path-specific decisions about intervention, revalidation and release.
\end{abstract}

\section{Introduction}

\paragraph{The practical problem: the answer is only the visible tip.}
An AI workflow may retrieve evidence, arrange a prompt, generate an answer and pass that answer to a decision rule.  A hallucination or failed check is visible at the surface.  The larger governance problem may remain hidden below it: an apparently acceptable answer can depend on weak evidence or on a workflow component whose error materially affects the final action.  We call this hidden dependency a \emph{structural iceberg}.  Ordinary output checks assess the visible tip; they do not show where a weakness entered, what depended on it or what must be revalidated.

\paragraph{Related approaches.}
Language-model uncertainty has been studied through self-evaluation \cite{kadavath2022}, semantic entropy \cite{kuhn2023}, concept-level calibration \cite{nakkiran2026}, faithful linguistic uncertainty \cite{liu2025}, and uncertainty in instruction following and function calling \cite{heo2025,ye2026}.  Predictive uncertainty can degrade under shift \cite{ovadia2019}; selective prediction and conformal risk control formalize abstention and risk--coverage \cite{geifman2019,angelopoulos2024}.  RAGAS (automated evaluation of retrieval-augmented generation) and ARES (an automated evaluation framework for retrieval-augmented-generation systems) score retrieval and generation quality \cite{ares2024,ragas2024}; FActScore decomposes text into atomic facts \cite{factscore2023}; and Self-RAG trains models to retrieve and critique \cite{selfrag2024}.  These methods estimate uncertainty, assess support or improve generation.  They do not, however, by themselves preserve the versioned workflow path that produced a discrepancy, bound its downstream effect and identify the components requiring revalidation.  That is the gap addressed here.

AI-governance research has separately emphasized lifecycle-wide internal auditing \cite{raji2020}, standardized model and service documentation \cite{mitchell2019,arnold2019}, and traceable measurement linked to risk-management action \cite{nistairmf2023}.  Provenance systems can record interactions across an agentic workflow \cite{provagent2025}.  These approaches motivate auditable records and organizational controls; SSEG adds a quantitative, path-specific account of how an observed weakness can reach a governed action.

\paragraph{What we introduce.}
A \emph{stochastic semantic evidence graph} (SSEG) is a directed graph of the observable workflow.  Its nodes retain meaning-bearing objects---for example, evidence passages, claims, evaluation results and the governed action---and its edges record which outputs were supplied to which later components.  Each node carries a local uncertainty measurement; each edge bounds how strongly a discrepancy can be transmitted.  SSEG therefore preserves both the magnitude and the location of uncertainty.  We use it through the four-stage Expose--Localize--Route--Govern (ELRG) workflow:
\begin{enumerate}[leftmargin=*,nosep,label=\textbf{\arabic*.}]
\item \textbf{Expose} a hidden dependency rather than reducing the workflow to terminal confidence.
\item \textbf{Localize}---that is, path-localize and rank---the nodes and retained paths that contribute most to uncertainty in the final action.
\item \textbf{Route} the alarm to the retained source or component that requires review.
\item \textbf{Govern} the action by comparing propagated uncertainty with a declared tolerance.
\end{enumerate}
The framework complements rather than replaces calibration, entailment and task-specific controls.  It may contribute directly to a release decision when its measurements and tolerances have been qualified, or it may explain an alarm raised by a separately qualified gate.  Its distinctive object is the versioned answer--evidence dependency.  ELRG connects lifecycle auditing and trace provenance \cite{provagent2025,raji2020} with selective and risk-controlled release \cite{geifman2019,angelopoulos2024}; the pathwise composition of these functions is the contribution developed here.

\paragraph{Central thesis.}
A terminal pass---the commonplace endpoint of many governance approaches---can mean either that the workflow is robust or that the monitor is blind to a fragile dependency.  SSEG distinguishes these possibilities only to the extent that local measurements detect the weakness.  It is not a universal factuality classifier.  It links uncertainty measurement to a concrete governance action: release, inspect a named path or revalidate its descendants.  In this paper, \emph{discrepancy} is a distance from declared reference behaviour, \emph{error} requires an independent label or outcome, and \emph{uncertainty} is represented by a probability law or identified set.

\paragraph{How to read Figure~\ref{fig:overview}.}
Figure~\ref{fig:overview} gives a conceptual overview of SSEG: it follows an AI workflow from a user request, external evidence and tool results through RAG retrieval and prompt construction, into the language model's token sequence and generated claim, and finally through evidence checking to a governed terminal action.  The inputs are the request, available evidence and provenance, and any tool result.  Nodes represent observable workflow components or, within the language model, generated tokens.  Directed edges record which component outputs are supplied downstream; they are data dependencies, not attention weights or inferred reasoning steps.  The output is a candidate action together with propagated uncertainty and a trace of the paths responsible for it, which ELRG uses to release the action or route it for review.

\begin{figure}[t]
\centering
\resizebox{.99\textwidth}{!}{%
\begin{tikzpicture}[
  vnode/.style={draw=BLblue,circle,fill=white,minimum size=9mm,inner sep=0pt,font=\small\bfseries},
  root/.style={vnode,fill=green!7},
  terminal/.style={vnode,draw=BLorange,fill=yellow!18},
  elrg/.style={draw=BLorange,rounded corners=10pt,fill=orange!5,minimum height=13mm,minimum width=114mm,align=center,font=\scriptsize,inner xsep=9pt},
  tok/.style={vnode,minimum size=7mm,font=\tiny},
  flow/.style={-{Stealth[length=1.7mm]},thick,draw=BLblue},
  uqflow/.style={-{Stealth[length=1.9mm]},very thick,draw=BLorange},
  typelab/.style={font=\scriptsize,text=black!75,align=center},
  lab/.style={font=\tiny,text=black!70,fill=white,inner sep=.7pt},
  ssegfield/.style={draw=BLblue,dashed,rounded corners=14pt,fill=blue!1,inner sep=7pt},
  lmfield/.style={draw=black!45,densely dotted,rounded corners=7pt,fill=orange!2,inner sep=4pt},
  keybox/.style={draw=black!30,rounded corners=4pt,fill=white,align=center,font=\scriptsize,inner xsep=5pt,inner ysep=3pt}
]
\node[root] (u) at (0,1.8) {$U$};
\node[root] (e) at (0,0) {$E$};
\node[root] (o) at (0,-1.8) {$O$};
\node[font=\scriptsize\bfseries,text=black!70] at (.35,2.75) {INPUTS $x=(Z_U,Z_E,Z_O)$};

\node[vnode] (r) at (2.35,.9) {$R$};
\node[vnode] (p) at (4.05,0) {$P$};
\node[tok] (t1) at (5.25,0) {$t_1$};
\node[tok] (t2) at (6.25,0) {$t_2$};
\node[font=\scriptsize] (td) at (7.05,0) {$\cdots$};
\node[tok] (tm) at (7.85,0) {$t_m$};
\node[vnode] (c) at (8.95,0) {$C$};
\node[vnode] (h) at (10.15,0) {$H$};
\node[terminal] (term) at (11.45,0) {$T$};

\node[typelab] at (0,1.25) {request};
\node[typelab] at (0,-.74) {evidence\\$+$ provenance};
\node[typelab] at (0,-2.38) {tool result};
\node[typelab] at (4.05,.62) {prompt};
\node[font=\tiny,text=BLblue,align=center] (toklabel) at (6.55,-.72)
  {conditional token law: $K_j(t_j\mid t_{<j},P)$};
\node[typelab] at (8.95,.68) {claim};

\coordinate (ssegbottom) at (5.75,-2.55);
\begin{scope}[on background layer]
\node[ssegfield,fit=(u)(e)(o)(r)(p)(t1)(t2)(tm)(c)(h)(term)(toklabel)(ssegbottom),label={[font=\small\bfseries,text=BLblue]above:SSEG}] (sseg) {};
\node[lmfield,fit=(t1)(t2)(td)(tm),label={[font=\tiny\bfseries,text=black!65]above:LANGUAGE MODEL}] (lm) {};
\end{scope}
\node[elrg] (out) at (6.35,-4.85) {\textbf{ELRG applies terminal bound $B_T^+$ to candidate action $Z_T$}\\[1mm]\textbf{Expose} $B_T^+>\tau$? $\quad\vert\quad$ \textbf{Localize} path-rank $\rho_{vT}$ $\quad\vert\quad$ \textbf{Route} artifact $+\mathcal R(v)$ $\quad\vert\quad$ \textbf{Govern} release $Z_T$/review};

\draw[flow] (u.east) -- node[lab,above] {$Z_U$} (r.north west);
\draw[uqflow] (e.east) -- node[lab,above] {$Z_E$} (r.south west);
\draw[uqflow] (r.east) -- node[lab,above] {$Z_R=\bm S$} (p.north west);
\draw[flow] (o.east) -- node[lab,below] {$Z_O$} (p.south west);
\draw[uqflow] (p.east) -- (t1.west);
\draw[uqflow] (t1.east) -- (t2.west);
\draw[uqflow] (t2.east) -- (td.west);
\draw[uqflow] (td.east) -- (tm.west);
\draw[uqflow] (tm.east) -- (c.west);
\draw[uqflow] (c.east) -- (h.west);
\draw[uqflow] (h.east) -- (term.west);
\draw[flow] (r.north) -- (2.35,1.9) -- node[lab,above] {grounding evidence + provenance} (10.15,1.9) -- (h.north);
\draw[uqflow] (term.south) -- node[lab,right,pos=.82] {$Z_T,\ B_T^+$} (term.south |- out.north);

\node[typelab,fill=blue!1,inner sep=1.2pt] at (2.35,1.48) {RAG retrieval};
\node[typelab,fill=blue!1,inner sep=1.2pt] at (10.15,.84) {evidence check};
\node[typelab,fill=blue!1,inner sep=1.2pt] at (11.45,-.68) {terminal action};

\node[keybox] at (5.35,-3.55) {\textbf{KEY}\quad edge $p\to v$: $Z_p$ supplied to $v$\\
\textcolor{BLblue}{\textbf{blue edge}}: recorded dependency \quad\textbar\quad
\textcolor{BLorange}{\textbf{orange edge}}: uncertainty propagated toward $T$\\
\textcolor{green!45!black}{\textbf{green node}}: input \quad\textbar\quad
\textcolor{yellow!65!orange}{\textbf{yellow node}}: terminal candidate};
\end{tikzpicture}}
\caption{The SSEG shown explicitly as a DAG.  The observed inputs are $x=(Z_U,Z_E,Z_O)$.  Equal-sized nodes represent observable components.  An edge $p\to v$ records that output $Z_p$ is supplied to child $v$.  Retrieval node $R$ is the RAG component: it selects context from evidence input $E$ and returns the retained source set $Z_R=\bm S=(S_1,\ldots,S_J)$, which is supplied to prompt node $P$ and preserved for evidence check $H$.  Thus $R$ denotes the retrieval operation, whereas $S_j$ denotes one retained source.  The language-model component contains the autoregressive chain $t_1\to\cdots\to t_m$, with conditional kernel $K_j(t_j\mid t_{<j},P)$.  Evidence check $H$ therefore receives both generated claim $C$ and its retained sources from $R$.  Direct source--token or tool--token edges are added only when declared traces or controlled interventions identify them; they are not inferred from attention alone.  Here $Z_T$ is the candidate terminal action, $B_T^+$ is its robust propagated uncertainty bound and $\tau$ is the declared release tolerance.  \emph{Expose} applies the terminal check $B_T^+>\tau$; \emph{Localize} path-ranks contributions $\rho_{vT}=\Gamma_{vT}b_v$; \emph{Route} maps a leading node $v$ to its retained review artifact $\mathcal R(v)$; and \emph{Govern} releases $Z_T$ or sends the affected path for human or automated review.  After review or repair, the changed node and its recorded descendants are recomputed before the gate is applied again.}
\label{fig:overview}
\end{figure}

\paragraph{Overall contribution and positioning.}
The novelty of SSEG is that it connects uncertainty measurement, evidence assessment, provenance and audit within a statistical governance layer for multi-component language-model workflows.  Unlike terminal uncertainty, factual-support or retrieval-quality scores, it retains the directed workflow topology, attaches measurements to the components that produced them, propagates their possible effects to the terminal action and returns the responsible paths for review.  The theoretical contribution is a conditional qualification framework for typed stochastic workflows: it decomposes terminal discrepancy into nodewise defects and their downstream path influences, and provides a robust extension when graph structure or local bounds are uncertain.  Connecting observable workflow interventions to these local terms produces an auditable, path-specific certificate rather than a single terminal score.  The broader methodological contribution is to integrate this certificate with typed workflow traces, controlled interventions and local probabilistic measurements in the actionable ELRG governance procedure.  Empirically, the paper evaluates this construction through known-truth stress tests, independently labelled retrieval and provenance benchmarks---several evaluated across multiple model families---tool-use workflows and a sealed end-to-end financial-governance study.  This combination tests not only whether a gate detects risk, but whether it can expose a hidden dependency, path-localize the affected route and specify what must be reviewed or revalidated.

\paragraph{Overview of the paper.}
Section~2 defines the SSEG representation, its observation profiles and its treatment of token, provenance and claim--source uncertainty.  Section~3 derives the conditional pathwise bound that carries local discrepancies through the DAG to the governed action and explains how the resulting path contributions drive ELRG.  Section~4 presents the evidence ladder and experiments, moving from controlled cases with known truth to external localization benchmarks, tool-use studies and sealed end-to-end governance.  Section~5 summarizes the practical implications and limitations.  The appendices provide the joint-law construction, robust extension, proof, experimental protocols, additional limitation studies, compute record and reproducibility instructions.

\section{Stochastic semantic evidence graphs}

\paragraph{Graph-level view.}
Let $G=(V,\mathcal E_G)$ be a finite directed acyclic graph, with nodes $V$ and directed edges $\mathcal E_G$; loops are represented by unrolling them through time.  The output $Z_v$ of node $v$ takes values in an associated standard Borel space $(\mathcal Z_v,\mathcal B_v)$.  Each node also has a declared, application-specific type such as evidence, retrieval, claim, evaluation, tool or decision; this list is not exhaustive.  Write $\pa(v)$ for the parents of $v$ and $Z_{\pa(v)}=(Z_p:p\in\pa(v))$ for their outputs.

\paragraph{What is held fixed.}
All comparisons condition on a versioned workflow configuration $\lambda$, which records properties of the retriever, prompt and evaluator, together with the governance loss and permitted actions.  These choices are held fixed when workflows are compared.  In the main experiments, ``tool'' refers to an observed evaluation component rather than a general interactive API.  A non-root node has conditional probability kernel $K_v$:
\[
Z_v\mid(Z_{\pa(v)},\lambda)\sim K_v(\cdot\mid Z_{\pa(v)},\lambda),
\]
which induces the joint DAG law.  Each state space may contain a different kind of object, including a datum, document set, text context, probability law or decision.  \textbf{An edge means that a recorded parent output is supplied to a child;} it is not an attention weight or a claim about hidden reasoning.  The topology is constructed from the versioned trace contract, not inferred from outcome correlations.  Controlled edge removal or substitution and held-out residual checks challenge that contract; an unexplained dependency or a discrepancy beyond its envelope invalidates qualification for that operating domain.

\paragraph{The measured graph.}
Figure~\ref{fig:overview} shows the generic construction: evidence is retrieved, used to generate claims, checked against its sources and passed to a governed action.  When token probabilities are observable, the language component is represented by its autoregressive token chain; tokens refine that component rather than becoming independent workflow stages.  Prespecified wording changes test presentation stability; source removal or substitution tests evidence dependence.  A semantic-state map and external calibration are optional.  Detailed continuation, lexical-equivalence and provenance definitions appear in Appendices~\ref{app:lexical} and~\ref{app:provenance}.

\paragraph{SSEG is observation-adaptive, not open-weight-only.}
An \textbf{API-compatible SSEG} operates on observable workflow measurements such as retrieval scores, claim--source checks, tool traces, controlled interventions and externally supplied node risks.  A \textbf{logprob-enhanced SSEG} additionally refines a language node with returned token or complete-phrase probabilities, keeping truncated residual mass explicit.  A \textbf{self-hosted open-weight SSEG} can further retain full logits, hidden states and instrumented interventions.  Only the latter two profiles expose a token subgraph; open weights are therefore necessary for the deepest lexical and mechanistic audits, not for graph propagation itself.

\paragraph{Source-specific uncertainty.}
For each generated claim $U_i$ and retained source $S_j$, the evaluator scores one of four relation labels: \emph{support}, \emph{qualify}, \emph{contradict} or \emph{insufficient}.  Let $\mathcal H$ denote this four-label set, and let $H_{ij}\in\mathcal H$ denote the claim--source relation.  Define its score by
\[
q_{ij}^{(h)} := q_{ij}(h) := \mathbb P_H\!\left(H_{ij}=h \mid U_i,S_j,\lambda_{\rm eval}\right),
\qquad h\in\mathcal H,
\qquad \sum_{h\in\mathcal H}q_{ij}^{(h)}=1.
\]
Here $\mathbb P_H$ is the fixed evaluator-induced probability law conditional on the claim, source and evaluator settings.  These are model-produced scores, not human ground-truth labels.  The evaluator may be a pinned entailment model or a complete continuation law from an open-weight model; if it returns only one hard label, SSEG records a point mass and does not claim additional relation uncertainty.  A separate structural-reliance score is $r_{ij}=|\mathcal T_i|^{-1}\sum_{k\in\mathcal T_i}[\log p(t_k\mid P,S_j,t_{<k})-\log p(t_k\mid P,\widetilde S_j,t_{<k})]$, where $\mathcal T_i$ indexes the aligned tokens of claim $U_i$ and $\widetilde S_j$ is its fixed, label-blind control passage.  Relation and reliance are retained as distinct typed measurements together with the path that produced them; high semantic support does not by itself show that the generated claim depended on that source.  Scores from different sources are not multiplied because repeated or dependent passages need not provide independent evidence.  Controlled changes to retrieval, source content and presentation remain separate until a declared governance loss aggregates them.  Thus a citation alone is not verified provenance; the auditable object is the claim--source relation and its path.

\section{A conditional pathwise qualification bound}

\paragraph{Why sensitivity tests are not enough.}
It is commonplace to test a model or workflow by performing sensitivity tests.  Such tests reveal whether an output changes when retrieval, prompt presentation, model version or tool behaviour is perturbed.  However, sensitivity alone does not bound how a local change can propagate to the final action or retain the path responsible for that change.

Nodes represent observable workflow states, while channels specify where controlled perturbations enter the workflow and allow their downstream effects to be measured; for example, a retrieval channel varies the query, corpus or retriever and measures the resulting change at retrieval node $R$ and its descendants.  Holding all other elements of $\lambda$ fixed, a channel contrast perturbs one channel to locate where sensitivity enters the workflow.  Before deployment, the validator freezes a versioned reference workflow and measures its nodewise behaviour over the declared qualification domain using controlled benchmark runs; in synthetic tests this behaviour may instead be known by construction.  Let $K_v^0$ denote the resulting reference conditional law at node $v$, and let $K_v^{\mathrm{op}}$ denote the corresponding law observed in the operational workflow.  The reference is an approved comparison baseline, not necessarily factual truth.  For each node $v$, we fix a measurable lower-semicontinuous metric $d_v$ on $\mathcal Z_v$ and let $\mathcal W_v$ be the induced 1-Wasserstein distance.  We use bounded metrics in the experiments; otherwise finite first moments are required.  For parent output $z_{\pa(v)}=(z_p:p\in\pa(v))$, define
\begin{align}
b_v&=\sup_{z_{\pa(v)}}\mathcal W_v\{K_v^{\mathrm{op}}(\cdot\mid z_{\pa(v)}),K_v^0(\cdot\mid z_{\pa(v)})\}, \label{eq:defect}\\
\mathcal W_v\{K_v^0(\cdot\mid z),K_v^0(\cdot\mid z')\}
&\le\sum_{p\in\pa(v)}c_{vp}d_p(z_p,z'_p). \label{eq:influence}
\end{align}
Here $b_v$ is the largest local discrepancy at node $v$ over the declared parent domain.  Equation~\eqref{eq:influence} assumes finite $c_{vp}\ge0$ that bound how strongly a change at parent $p$ can affect node $v$.  Put simply, $b_v$ measures the local mismatch and $c_{vp}$ measures its possible transmission along one edge.

\paragraph{Main guarantee.}
Governance therefore needs to turn the nodewise discrepancies and edgewise transmission coefficients above into a bound on how much uncertainty can reach the terminal action, while retaining which paths contributed to that bound.
The following result is a DAG specialization of standard coupling and Dobrushin-style perturbation arguments \cite{dobrushin1956,makur2020}; the inequality itself is not claimed as new.  The methodological contribution is to connect typed workflow interventions to its local terms, preserve their nodewise decomposition and use the resulting bound for conditional qualification.

\begin{theorem}[Evidence-graph path certificate]
\label{thm:graph}
For nodes $v,T\in V$, write $v\preceq T$ when $v=T$ or a directed path connects $v$ to terminal node $T$.  Let a path be $\gamma=(n_0=v,n_1,\ldots,n_k=T)$, with edge set $E(\gamma)=\{(n_{j-1},n_j):1\le j\le k\}$.  Define its total influence by
\[
\Gamma_{vT}=\sum_{\gamma:v\rightsquigarrow T}\ \prod_{j=1}^{k}c_{n_j n_{j-1}},
\]
where, by convention, the zero-length path gives $\Gamma_{TT}=1$.  Then
\begin{equation}
\mathcal W_T\{\mathcal L(Z_T^{\mathrm{op}}),\mathcal L(Z_T^0)\}
\le B_T:=\sum_{v\preceq T}\Gamma_{vT}b_v. \label{eq:pathsum}
\end{equation}
\end{theorem}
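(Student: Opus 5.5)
The plan is a synchronous coupling of the operational and reference joint laws, built node by node in a topological order of $G$. The coupling yields a one-step recursion for the expected nodewise discrepancies, and unrolling that recursion along directed paths produces the path sum $\Gamma_{vT}$.

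\textbf{Step 1 (one-node transport inequality).} For any node $v$ and parent configurations $z,z'$, the triangle inequality for $\mathcal W_v$ combined with \eqref{eq:defect} and \eqref{eq:influence} gives
\[
\mathcal W_v\{K_v^{\mathrm{op}}(\cdot\mid z),K_v^0(\cdot\mid z')\}\le \mathcal W_v\{K_v^{\mathrm{op}}(\cdot\mid z),K_v^0(\cdot\mid z)\}+\mathcal W_v\{K_v^0(\cdot\mid z),K_v^0(\cdot\mid z')\}\le b_v+\sum_{p\in\pa(v)}c_{vp}\,d_p(z_p,z'_p).
\]
For root nodes the parent set is empty, so this reduces to $\mathcal W_v\le b_v$.

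\textbf{Step 2 (gluing the coupling).} Fix a topological order. Suppose a coupling of $(Z_u^{\mathrm{op}},Z_u^0)_{u<v}$ has already been built with the correct operational and reference marginals. Conditionally on the coupled parent vectors $(z,z')$, draw $(Z_v^{\mathrm{op}},Z_v^0)$ from an optimal coupling $\pi_v(\cdot\mid z,z')$ of $K_v^{\mathrm{op}}(\cdot\mid z)$ and $K_v^0(\cdot\mid z')$. Because the first component is drawn from the operational kernel at the operational parents, and the second from the reference kernel at the reference parents, each marginal is the correct DAG law. This is so because both joint laws factor through the same kernels in the same topological order.

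This step is where I expect the main obstacle: $\pi_v$ must be chosen measurably in $(z,z')$. Here I would invoke the standard measurable-selection result for optimal transport plans (e.g.\ Villani, \emph{Optimal Transport}, Cor.~5.22). It applies because the state spaces are standard Borel, the metrics $d_v$ are lower semicontinuous, and the costs are bounded or have finite first moments. If exact optimality caused trouble, I would use $\varepsilon$-optimal measurable couplings and let $\varepsilon\downarrow 0$ at the end.

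Taking expectations of Step 1 under this coupling, with $\delta_v:=\E\,d_v(Z_v^{\mathrm{op}},Z_v^0)$, gives the recursion
\[
\delta_v\le b_v+\sum_{p\in\pa(v)}c_{vp}\,\delta_p .
\]

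\textbf{Step 3 (unrolling along paths).} Next, prove by induction in topological order that
\[
\delta_v\le\sum_{u\preceq v}\Gamma_{uv}b_u .
\]
The key identity is that, for $u\neq v$, every path $u\rightsquigarrow v$ ends with a unique last edge $(p,v)$, so that
\[
\Gamma_{uv}=\sum_{p\in\pa(v)}c_{vp}\Gamma_{up},
\]
with $\Gamma_{vv}=1$. The convention $c_{n_jn_{j-1}}$ is read as (child, parent), matching \eqref{eq:influence}. Substituting the inductive hypothesis into the recursion and regrouping the finite sums by the source node $u$ yields the claim. All coefficients are nonnegative, so the inequalities are preserved throughout.

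\textbf{Step 4 (conclusion).} The constructed pair $(Z_T^{\mathrm{op}},Z_T^0)$ is one admissible coupling of $\mathcal L(Z_T^{\mathrm{op}})$ and $\mathcal L(Z_T^0)$. Hence
\[
\mathcal W_T\{\mathcal L(Z_T^{\mathrm{op}}),\mathcal L(Z_T^0)\}\le\delta_T\le B_T,
\]
which is \eqref{eq:pathsum}. Only ancestors of $T$ contribute, consistent with the index set $v\preceq T$.
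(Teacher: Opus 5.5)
Your proposal is correct and follows the paper's proof: a node-by-node coupling in topological order, the triangle-inequality step giving $\E\,d_v(Z_v^{\mathrm{op}},Z_v^0)\le b_v+\sum_{p\in\pa(v)}c_{vp}\,\E\,d_p(Z_p^{\mathrm{op}},Z_p^0)$, and unrolling this recursion into path sums. The differences are cosmetic. You unroll by induction using the last-edge identity $\Gamma_{uv}=\sum_{p\in\pa(v)}c_{vp}\Gamma_{up}$, whereas the paper writes the recursion as a matrix inequality and inverts the strictly lower-triangular $\bm I-\Lambda$ by a Neumann series. The paper also works from the outset with $\delta$-optimal measurable couplings, which is your fallback and the safer choice here, because Villani's Cor.~5.22 assumes a continuous cost while the paper only assumes $d_v$ is lower semicontinuous.
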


\paragraph{How to read the theorem.}
A proof and a robust extension for uncertain graph structure appear in Appendix~\ref{app:proof}.  \textbf{The proof shows that the terminal discrepancy is no larger than the sum, over the graph, of each local discrepancy multiplied by its downstream path influence.}  The workflow must abstain or seek review if the permitted action changes anywhere within this envelope.

\paragraph{Testing the theory.}
We first generate 35,000 controlled DAG cases: 5,000 seeded simulations in each of seven misspecification scenarios covering edge, weight, node-type, local-measurement, missing-telemetry and combined structural errors.  Each case begins from a known reference DAG and conditional node laws; the designated scenario then injects declared node or edge defects before the operational terminal law is recomputed.  Because the reference graph, injected local discrepancies and realized terminal discrepancy are known by construction, these cases test whether the bound covers the realized discrepancy; the target is deterministic coverage under the stated assumptions, not a statistical prediction interval.  We then test diagnosis on two external benchmarks.  MAGIC (Multi-Hop and Graph-Based Benchmark for Inter-Context Conflicts) creates labelled multi-step conflicts in retrieval-augmented generation \cite{lee2025magic}.  We intervene on its nodes, path-localize and rank the retained routes by their measured contribution and compare the ranking with withheld conflict-path labels.  ALCE (Automatic LLM Citation Evaluation) provides human labels for whether cited evidence supports generated claims \cite{alce2023}.  We replace individual sources, measure the response and compare it with those labels.  MAGIC and ALCE test practical localization; they do not certify the theorem because the complete organic workflow and its true local bounds are not observed.

\paragraph{From bound to governance action.}
The contribution $\rho_{vT}=\Gamma_{vT}b_v$ ranks where review can most reduce the terminal bound: a moderate defect on a strongly amplified path may matter more than a large isolated defect.  Operationally, ELRG first \emph{exposes} a terminal-envelope violation, then \emph{path-localizes} and ranks its largest node and path contributions, \emph{routes} the case to the retained source or component identity and finally \emph{governs} release against the declared tolerance.  Missing telemetry forces review.  After an intervention, recompute the changed node and all recorded descendants before applying the gate again.  This ranking is not a causal diagnosis, and the certificate does not establish factual truth.  A pass remains conditional on the declared graph, operating domain, model vintage, interventions and tolerances.  Appendix~\ref{app:proof} gives the robust corollary, and Algorithm~\ref{alg:sseg} gives the end-to-end procedure for constructing, qualifying and running an SSEG rather than assuming that its local measurements are already available.

\section{Evidence-path governance experiments}

\paragraph{One claim, one evidence ladder.}
The goal of the following experiments is to test one claim: retaining dependency structure turns a terminal warning into an actionable governance artifact.  The controlled study tests the bound; external benchmarks then test path localization, routing and release.  These heterogeneous endpoints are deliberately not pooled.  They form successive links in the same operational chain rather than a claim that SSEG is the best classifier on every task.  Table~\ref{tab:headline-ladder} states the comparator, endpoint and decision consequence at each stage.

\paragraph{What each benchmark contributes.}
Before the external benchmarks, a 35,000-case controlled study tests the pathwise bound where the true graph and injected defects are known.  Table~\ref{tab:main-study-design} then maps each external benchmark to one ELRG stage and endpoint.  The full experimental inventory and secondary studies are reported in Appendix~\ref{app:additional-experiments}.

\begin{table}[t]
\centering
\scriptsize
\setlength{\tabcolsep}{3.5pt}
\caption{Main study design.  Each benchmark tests one stage; endpoints are not pooled.}
\label{tab:main-study-design}
\begin{tabular}{p{.10\linewidth}p{.22\linewidth}p{.25\linewidth}p{.33\linewidth}}
\toprule
Stage & Study & Population & Endpoint and role\\
\midrule
Expose & Controlled stress & 35,000 known-truth cases & Bound coverage and false passes under graph misspecification.\\
Localize & MAGIC \cite{lee2025magic} & 960 cases; 8,592 paths & Rank-1 localization of labelled conflict paths.\\
Route & ALCE; VitaminC \cite{alce2023,vitaminc2021} & 1,467 links; 688 revisions & Claim--source reliance and affected-path recall.\\
Govern & FinGovBench \cite{fingovbench2026} & 1,200 cases; 600 pairs & Safe recall, unsafe release and exact-pair accuracy.\\
\bottomrule
\end{tabular}
\end{table}

\paragraph{Measuring the topology effect.}
In order to isolate the effect of graph topology, we compare SSEG with a measurement-matched flat ablation.  ``Flat'' does not mean uncertainty-free.  A flat rule may use confidence, entropy, entailment or calibrated probabilities, but it does not retain directed adjacency, path identity or descendant scope.  The flat ablation receives the same local uncertainty measurements as SSEG and deliberately discards their topology.  This isolates the value of graph structure.

\paragraph{Measurements and audit.}
The main studies use probabilities computed from model logits or returned log probabilities, changes under controlled source or prompt interventions, and explicitly identified external evaluators.  External scores remain fallible graph nodes rather than ground truth.  No main result asks a model to print its own confidence.  Development data fix every score, threshold and model before the corresponding held-out test labels are opened, and paired intervals resample the natural case, question or topology group.  The principal endpoints are respectively bound coverage and false passes; top-ranked conflict-path accuracy; claim--source reliance and affected-path recall; and safe-release recall, unsafe releases and exact matched-pair accuracy.  Comparisons with measurement-matched flat ablations isolate graph structure; comparisons with external evaluators are labelled operational rather than information-matched.

The controlled study requires no language model.  MAGIC and ALCE use the pinned model families listed in Appendix~\ref{app:model-compute}; FinGovBench uses GPT-OSS-20B and an unchanged-gate Qwen3-8B transfer.  Appendix~\ref{app:model-compute} also reports checkpoint revisions, hardware, timings and profiling limitations.

\paragraph{Expose: can a hidden defect pass a terminal gate?}
The \verb|expose| result in Table~\ref{tab:headline-ladder} uses 35,000 known-truth graph-misspecification cases: the robust path envelope covers every realized terminal discrepancy and produces no false automatic pass.  A nominal terminal envelope false-passes 42.2\% of combined-stress cases.  This validates the conditional calculation under known local defects and topology errors; it does not estimate the prevalence of such faults in deployment.

\paragraph{Path-localize: can the responsible path be found?}
The \verb|localize| result in Table~\ref{tab:headline-ladder} uses 960 held-out MAGIC cases \cite{lee2025magic}, never used for score selection, with 8,592 root-to-answer candidate paths and withheld single- or multi-hop conflict labels.  A frozen SSEG score combines the same lexical node used by the comparator with controlled probability changes along the retained path.  Rank-1 accuracy is the fraction of cases in which the highest-scored candidate is a labelled conflicting path.  It rises from 75.8\% to 80.3--82.3\% across the five models; every case-paired bootstrap 95\% interval for the gain excludes zero.  The result supports localization of the labelled conflict path, not a uniquely causal triple.

\paragraph{Route: can an alarm name the evidence requiring review?}
The \verb|route| result in Table~\ref{tab:headline-ladder} uses 1,467 held-out ALCE claim--source links \cite{alce2023}.  We measure source reliance as the change in aligned answer-token log probability when the cited passage is replaced by a fixed, label-blind control.  Unsupported links show 4.8--8.2 times weaker median reliance than supported links across five model families.  Yet 31.2--38.0\% of human-supported links also trigger structural review because the answer depends weakly or unstably on the source.  The graph therefore separates \emph{support} from \emph{evidential tether}: it does not relabel those links, but names the claim--source edge to verify.  In a separate replay of 688 genuine VitaminC revisions, affected-path recall is 688/688; SSEG reviews 688 rather than 2,528 claims---72.8\% fewer than exhaustive page review---with the same NLI release decisions.

\paragraph{Govern: does topology improve the complete release decision?}
The \verb|govern| result uses a new sealed FinGovBench extension \cite{fingovbench2026} with 600 matched topology pairs.  Each pair holds the event and local workflow components fixed while changing whether the affected node reaches a governed action; 540 cases are release-eligible and 660 require withholding, including 120 non-compensable red-line cases.  Self-hosted GPT-OSS-20B supplies exhaustive continuation laws from causal logits for a primary prompt, an information-equivalent presentation and a matched neutral intervention.  The release map and acceptance criteria are frozen before these 3,600 laws are measured and before any label is opened.  Further details of the FinGovBench design, evaluation and residual cases are provided in Appendix~\ref{app:fingov-safety}.

GPT-OSS-20B is frozen and evaluated zero-shot: it is not fine-tuned on FinGovBench, and SSEG does not alter its weights or generated continuation laws.  The graph-aware release map is fitted only on a disjoint development population and then frozen before the sealed cases are run.  The model-only and SSEG conditions therefore share the same model outputs; what differs is whether the release decision uses the retained workflow topology and propagated path evidence.  This makes the comparison a test of governance value, not a claim that SSEG improves the model's underlying predictive ability.

This design does not ask GPT-OSS to print a verdict on whether its own answer is trustworthy.  GPT-OSS acts as a local measurement instrument: its causal logits supply candidate-decision and defect probabilities, while a deterministic, externally frozen SSEG rule combines those measurements with recorded topology to make the release decision.  Nevertheless, the validation is not model-independent, because the governed model supplies both the candidate and its principal local uncertainty measurements.  A confidently wrong local law can therefore pass through the graph, as the eight residual errors show.

Table~\ref{tab:fingov-confirmation} reports both model evaluations.  For GPT-OSS, the model alone releases 452/660 unsafe cases.  Flat-UQ avoids almost all release---six safe and one unsafe case---and therefore has 1.1\% safe-release recall.  SSEG releases all 540 safe cases, withholds 652/660 unsafe cases and classifies 592/600 topology pairs exactly.  Its unsafe-release rate is 1.21\%, a 98.2\% relative reduction from the model-alone rate of 68.5\%; all 120 hard red-line cases are withheld.  Safe-release recall improves over flat-UQ by 98.9 points (paired topology-bootstrap 95\% interval $[98.0,99.6]$).  The prespecified production-strength requirement of zero observed unsafe releases is not met, so this is a strong confirmatory governance result rather than a universal safety certificate.  The next analysis tests whether the same frozen gate transfers to a second model without refitting.

\paragraph{Cross-model transfer through the unchanged gate.}
The lower panel of Table~\ref{tab:fingov-confirmation} reports the transfer result.  We froze Qwen3-8B predictions before evaluating the same 1,200 FinGovBench labels.  Qwen supplies the same three candidate-event causal-logit measurements as GPT-OSS, with all probability outside the declared family retained as residual mass; neither its weights nor the previously fitted release maps are changed.  The model-only rule releases no case and is therefore safe but useless.  Flat-UQ releases 523/540 safe cases but also 520/660 unsafe cases and classifies only 80/600 matched topology pairs exactly.  SSEG releases all 540 safe cases, releases none of the 660 unsafe cases and classifies all 600 pairs exactly.  Its 3.15-point safe-recall gain over flat-UQ has a topology-clustered 95\% interval of $[1.82,4.67]$ points.  This is an out-of-model transfer on the existing confirmation population, not a second newly sealed population, but it directly shows that the graph-aware gate is not tied to GPT-OSS outputs.

The observation interface was itself prespecified and audited.  Qwen passed a 25-case, label-free qualification requiring at least 80\% probability mass on the declared candidate family under every primary, equivalent and neutral presentation.  In the full run, 2 primary and 162 equivalent presentations fell below that level, while no neutral presentation did; these cases remain in the evaluation and their residual mass remains explicit.  Mistral-7B-Instruct-v0.3 and OLMo-2-7B-Instruct did not pass the pilot interface check: their immediate code-plus-terminator events omitted substantial model probability mass for at least one presentation, so no end-to-end result is reported for them.  Access to logits alone is therefore insufficient; the declared continuation family must first be shown to represent the model's response law.  These failures limit the present cross-model claim rather than being repaired by renormalizing away missing mass.

\begin{table}[t]
\centering
\scriptsize
\setlength{\tabcolsep}{3.2pt}
\caption{Sealed FinGovBench results on 1,200 cases (540 release-eligible; 660 requiring withholding).  Exact-pair accuracy requires both members of a matched topology pair to be correct.}
\begin{tabular}{llrrr}
\toprule
Model & Rule & Safe recall & Unsafe release & Exact pairs\\
\midrule
GPT-OSS-20B & Model only & 82.6\% & 452/660 & 20.5\%\\
 & Flat-UQ & 1.1\% & 1/660 & 11.0\%\\
 & SSEG & \textbf{100.0\%} & \textbf{8/660} & \textbf{98.7\%}\\
\midrule
Qwen3-8B & Model only & 0.0\% & 0/660 & 10.0\%\\
 & Flat-UQ & 96.9\% & 520/660 & 13.3\%\\
 & SSEG & \textbf{100.0\%} & \textbf{0/660} & \textbf{100.0\%}\\
\bottomrule
\end{tabular}
\label{tab:fingov-confirmation}
\end{table}

\paragraph{The eight residual icebergs are diagnostically coherent.}
SSEG unsafely releases eight cases: four involve conflicting evidence, two monitoring drift, one evaluator coupling and one calculation error.  None involves stale sources, unit-scale error, permission or population expansion, prompt-policy drift or a hard red line.  In every residual case, GPT-OSS selects \texttt{NONE} under both equivalent presentations.  Relative to the 652 correctly withheld unsafe cases, the equivalent-prompt laws have lower entropy, larger winning margins, larger \texttt{NONE} mass and lower propagated auditable risk.  The error is therefore interpretable: the graph is present, but its local model measurement is confidently wrong about whether a defect exists.  This is exactly the limitation predicted by the conditional theory---propagation cannot recover a local defect that the observation channel fails to measure.

\paragraph{Secondary evidence.}
The appendix studies test whether the same path representation remains useful outside the four headline stages.  On 96 ToolSandbox executions across two agent models, retained paths reduce post-alarm review by 28.8\%, although trace-only SSEG does not improve detection over the flat contract alarm \cite{toolsandbox2025}.  On 560 untouched Self-RAG source interventions, retaining path structure raises average precision for detecting response-destabilizing evidence changes from .283 to .538; exact-hop localization requires paired local edge contrasts \cite{selfrag2024,musique2022}.  RAGTruth provides a weaker but informative limitation: evidence-sensitive spans modestly improve word-level hallucination localization, while summary-level and incomplete-law transfer gates fail \cite{ragtruth2024}.  These results support targeted review after a qualified alarm, not a universal claim of better failure detection; full protocols and negative results appear in Appendix~\ref{app:secondary}.

Taken together, Table~\ref{tab:headline-ladder} shows the progression from exposing a hidden terminal risk, through path localization and source-specific routing, to a complete release decision.  The table should therefore be read as an evidence ladder rather than as a pooled leaderboard across incomparable tasks.

\begin{table*}[t]
\centering
\small
\setlength{\tabcolsep}{3.5pt}
\caption{The main empirical argument.  Each stage in the ELRG framework tests a different consequence of retaining dependency structure.  Note that heterogeneous endpoints are not pooled.}
\begin{tabular}{>{\centering\arraybackslash}m{.065\textwidth}>{\raggedright\arraybackslash}m{.145\textwidth}>{\raggedright\arraybackslash}m{.18\textwidth}>{\raggedright\arraybackslash}m{.22\textwidth}>{\raggedright\arraybackslash}m{.25\textwidth}}
\toprule
Stage & Iceberg question & Governance without SSEG & Governance with SSEG & Decision consequence\\
\midrule
\shortstack{\stageletter{E}\\[-1pt]\stageletter{X}\\[-1pt]\stageletter{P}\\[-1pt]\stageletter{O}\\[-1pt]\stageletter{S}\\[-1pt]\stageletter{E}} & Can hidden defects pass? & Nominal terminal gate false-passes 42.2\% under combined stress. & 100\% observed bound coverage and zero false passes across 35,000 controlled cases. & A terminal pass is insufficient: widen the envelope or escalate when structural uncertainty is unresolved.\\
\midrule
\shortstack{\stageletter{L}\\[-1pt]\stageletter{O}\\[-1pt]\stageletter{C}\\[-1pt]\stageletter{A}\\[-1pt]\stageletter{L}\\[-1pt]\stageletter{I}\\[-1pt]\stageletter{Z}\\[-1pt]\stageletter{E}} & Can the conflict path be found? & Lexical rank-1 is 75.8\%. & MAGIC rank-1 is 80.3--82.3\% across five model families; all paired gains are positive. & Inspect the highest-contribution path first instead of searching every evidence path.\\
\midrule
\shortstack{\stageletter{R}\\[-1pt]\stageletter{O}\\[-1pt]\stageletter{U}\\[-1pt]\stageletter{T}\\[-1pt]\stageletter{E}} & Which source should be reviewed? & HHEM-plus-lexical flat rank-1 is 58.7\%; RAGAS is 61.3\%. & ALCE rank-1 is 68.0\%; on 688 VitaminC revisions, SSEG finds every affected claim path while reviewing 72.8\% fewer claims than exhaustive NLI. & Route review to the named claim--source edge and revalidate its dependent claims.\\
\midrule
\shortstack{\stageletter{G}\\[-1pt]\stageletter{O}\\[-1pt]\stageletter{V}\\[-1pt]\stageletter{E}\\[-1pt]\stageletter{R}\\[-1pt]\stageletter{N}} & Can a closed-loop decision be released safely? & GPT-OSS alone releases 452/660 unsafe cases; flat-UQ releases almost nothing. & SSEG releases all 540 safe cases and withholds 652/660 unsafe cases with GPT-OSS; unchanged-gate Qwen transfer releases 540/540 safe and 0/660 unsafe. & Use the graph-aware gate for selective release; inspect residual cases as failures of local defect measurement.\\
\bottomrule
\end{tabular}
\label{tab:headline-ladder}
\end{table*}

\section{Conclusion}
The evidence supports a single conclusion: terminal validity and path integrity are distinct, but they can be governed together.  The known-truth study validates conditional propagation; MAGIC validates path localization; ALCE and VitaminC validate source-specific routing; and the sealed FinGovBench extension shows that the same representation can materially improve a complete release decision.  The latter result is especially informative because it includes both success and failure.  SSEG detects 98.8\% of unsafe GPT-OSS cases without withholding a safe case, yet eight confident local no-fault measurements still pass.  Under an unchanged-gate Qwen transfer, it releases every safe case and no unsafe case, whereas flat-UQ releases 520 unsafe cases.  The graph therefore adds more than an explanation attached after a decision: it changes which actions are released and transfers beyond the model used to fit the gate.  At the same time, the GPT-OSS residual failures and the Mistral/OLMo interface failures reveal two precise remaining dependencies---the quality of the local observation law and whether the declared continuation family captures it.  The operational chain is consequently explicit: \textbf{expose the iceberg, localize its path, route the affected evidence or component and govern release.}

\clearpage

\section*{Reproducibility statement}
The appendices and reproducibility archive contain frozen manifests, model revisions, prompts, complete-phrase records, analysis code and machine-readable summaries.  For FinGovBench they also retain the preregistration, separately sealed labels, 3,600 causal-logit laws, frozen release map, SHA-256 prediction freeze, final evaluation and post-seal residual analysis.  A claim ledger maps every headline result to its artifact and field.  Downloaded corpora and caches are reconstructed by documented scripts; failed gates and invalidated pilots are retained.

\section*{AI use and ethics statement}
Generative AI tools assisted with manuscript editing, experiment-code drafting, literature discovery and formatting.  The authors wrote and revised the paper themselves; AI-generated prose was not accepted as evidence or authorship.  Every citation was checked by the authors against its source, and the data, analysis code, intermediate artifacts and reported results went through repeated rounds of human review.  The authors executed the analyses, inspected the underlying records and take responsibility for every claim.  The study uses non-personal benchmark records and aims to improve uncertainty measurement in consequential AI workflows.  No universal safety or causal claim is made.

\bibliographystyle{plain}

\fi

\ifdefined\mainonly
\else
\ifdefined\supplementonly
\title{Finding Icebergs in Language-Model Workflows\\[0.3em]\large Supplementary Material}
\maketitle
\paragraph{Scope.}
This supplement contains proofs, expanded protocols, limitation studies, negative results and observation-profile details supporting the main manuscript.  These materials are separated from the nine-page paper to keep its empirical claim focused.  The common-control four-panel iceberg visualization will be restored after the corrected Mistral run replaces the single panel affected by a duplicated beginning-of-sequence token; the other audited model outputs do not require rerunning.
\fi
\ifdefined\supplementonly
\else
\clearpage
\fi
\appendix
\raggedbottom
\section{Anonymous reproducibility archive}
\label{app:reproducibility-archive}

The purpose of this section is to provide a self-contained route for verifying the paper's headline results and locating the corresponding implementation artifacts.

Code, cached case-level records, frozen result artifacts, integrity manifests and tests are available in the anonymous review repository at
\begin{center}
\url{https://anonymous.4open.science/r/sseg-review-artifacts-1FD6/}.
\end{center}
The archive contains visible local implementations of graph construction, candidate-law normalization, uncertainty propagation, governance scoring, evaluation and bootstrap analysis; the paper-reproduction path does not depend on a hidden API service.  It distinguishes recomputation from artifact verification and from a new live replication, so that results generated under a different model, tokenizer, prompt family or provider are not silently mixed with the frozen paper evidence.

\paragraph{Implementation map and minimum environment.}
The reference implementation targets Python 3.10 or later and was checked with Python 3.13.9.  Its core offline dependencies are NumPy 1.24 or later and SciPy 1.10 or later; Pytest 8 or later runs the test suite.  The SSEG data structure and path-propagation calculation are implemented in \texttt{src/sseg\_repro/graph.py}; \texttt{src/sseg\_repro/fingov.py} reconstructs the sealed governance metrics and clustered bootstrap; \texttt{src/sseg\_repro/artifacts.py} enforces the SHA-256 and schema contract; and \texttt{src/sseg\_repro/cli.py} provides the command-line entry point.  In particular, \texttt{graph.py} materializes declared parent--child dependencies, computes local-to-terminal path contributions and returns the propagated terminal bound and ranked review paths.  It does not infer graph edges from attention or correlations.

\paragraph{Minimal headline-results workflow.}
From the repository root, the reviewer can run:
\begin{verbatim}
python -m venv .venv
source .venv/bin/activate
python -m pip install -e '.[test]'
sseg-repro paper --bootstrap 20000
pytest -q
\end{verbatim}
The \texttt{paper} command first verifies 25 reconstruction inputs (91,566,909 bytes) against \texttt{reconstruction\_manifest.json}.  It then regenerates the 35,000 controlled cases; reconstructs MAGIC path rankings from 54,430 frozen conditional-law records; recomputes the five-model ALCE statistics and the 75-question operational routing comparison from 1,467 link-level records; replays all 688 VitaminC updates over 440 sealed workflows; recomputes the RAGTruth word-level and ToolSandbox case-level results; and reconstructs both FinGovBench evaluations from case-level predictions and labels.  It repeats the declared 20,000-resample analyses, fails closed on any manuscript-contract mismatch and writes machine-readable comparisons, manuscript-style tables, figures and a visual report under \texttt{paper\_outputs/}.  This begins after model acquisition: repeating an acquisition stage still requires the corresponding public benchmark, pinned checkpoint or evaluator environment.

The repository supports three modes:
\begin{enumerate}[leftmargin=*]
    \item \textbf{Offline paper reproduction (default).}  This mode requires no API key or model-provider call.  It verifies every reconstruction input, recomputes all current headline metrics from case-level records or frozen conditional laws, regenerates the displayed tables and figures, enforces the frozen manuscript-value contract and runs the test suite.  It does not claim to repeat the earlier language-model acquisition stage.
    \item \textbf{Local open-weight measurement.}  A user may supply a pinned Hugging Face causal-language-model checkpoint, including the paper's GPT-OSS-20B revision.  The local adapter teacher-forces every declared candidate continuation under the same prefix, records token-level log probabilities and normalizes the exhaustive candidate family before applying the visible SSEG code.  The resulting observations constitute a new replication unless they reproduce the declared checkpoint and protocol.
    \item \textbf{API-compatible observation profile.}  An optional OpenAI-compatible runner records the request configuration, response identifier, generated text and any returned generated-token log probabilities.  Because such an endpoint need not expose weights, an immutable checkpoint or a complete competing-phrase law, this mode is provided for observation-profile comparisons and is not treated as a reproduction of the paper's open-weight token-law result.
\end{enumerate}
\section{SSEG algorithm and core protocol}
\label{app:protocols}

The purpose of this section is to state the implementable SSEG algorithm, its inputs and outputs, and the qualification and runtime protocol used in the experiments.

\subsection{An implementable SSEG algorithm}
\label{app:algorithm}

Algorithm~\ref{alg:sseg} gives the qualification and runtime protocol for an implementable SSEG.

An implementation requires only three declared objects: an observable workflow $\mathcal W_{\rm obs}$ from which a typed acyclic graph $\overline G$ with terminal node $T$ can be traced; node measurement rules $M_v$ and controlled interventions for estimating local discrepancies and edge transmission; and a governance policy $\Pi$ containing the operating domain, hard controls, terminal tolerance and permitted actions.  Here $\mathcal W_{\rm obs}$ means the deployed chain of retrievers, prompts, model calls, tools, evaluators, data transfers and terminal decision rules---not the language model's weight matrices.  A pinned model identifier or weight revision is recorded as component metadata when available.  An edge denotes an observed transfer of a parent output to a child, not correlation, attention or narrative relevance.  Cycles are unrolled in time.

Qualification precedes the governed case, and the frozen result is reused across runtime traces.  A runtime trace populates the graph, propagates uncertainty and produces the ELRG decision.  ELRG is therefore the operational reading of the populated SSEG, not a separate scoring model.  After node measurement, propagation is $O(|V|+|E|)$; model calls dominate cost.  A missing measurement or unqualified dependency triggers review rather than contributing zero uncertainty.

\begin{algorithm}[t]
\small
\DontPrintSemicolon
\caption{Construct and run a stochastic semantic evidence graph}
\label{alg:sseg}
\KwIn{Observable workflow $\mathcal W_{\rm obs}$ (components, data transfers and versions; not model weights); disjoint qualification records $(D_{\rm fit},D_{\rm val})$; policy $\Pi$; versioned runtime trace $x$}
\KwOut{Governance action $a$ and reproducible SSEG certificate $C_x$}
\tcp{Qualification: construct and freeze the SSEG}
$\overline G\leftarrow\textsc{TraceGraph}(\mathcal W_{\rm obs},D_{\rm fit})$\;
\ForEach{$v\in V(\overline G)$}{
  $(M_v,d_v,\mathcal I_v)\leftarrow\textsc{QualifyNode}(v,D_{\rm fit})$\;
  $b_v^+\leftarrow\textsc{SimultaneousBound}(M_v,d_v,\mathcal I_v,D_{\rm val})$\;
  \If{$b_v^+$ is unidentified}{\textsc{MarkUnqualified}$(v)$\;}
}
\ForEach{edge $p\rightarrow v$ in $\overline G$}{
  $c_{vp}^+\leftarrow\textsc{InfluenceBound}(p,v,\mathcal I_p,D_{\rm val})$\;
  \If{$c_{vp}^+$ is unidentified}{\textsc{MarkUnqualified}$(p\rightarrow v)$\;}
}
$\Pi\leftarrow\textsc{FitPolicy}(D_{\rm fit})$\;
$q\leftarrow\textsc{Validate}(\overline G,\Pi,D_{\rm val})$
\tcp*[r]{coverage, unsafe release, availability}
$\mathcal S^\star\leftarrow\textsc{Freeze}(\overline G,\{M_v,b_v^+,c_{vp}^+\},\Pi,q,\text{versions},\text{hashes})$\;
\BlankLine
\tcp{Runtime: populate the frozen SSEG and govern the action}
$G_x\leftarrow\textsc{Materialize}(x,\mathcal S^\star)$\;
\If{$G_x$ has missing telemetry, unqualified dependencies, version mismatches or lies outside the operating domain}{
  \Return{$(\textsc{Review},\textsc{UnresolvedCertificate}(G_x))$}\;
}
\ForEach{$v\in V(G_x)$}{
  $b_v^+(x)\leftarrow M_v(x)$\;
  \If{$b_v^+(x)$ is unavailable}{\Return{$(\textsc{Review},\textsc{UnresolvedCertificate}(v))$}\;}
}
$g_T\leftarrow1$\;
\ForEach{$v\ne T$ in reverse topological order}{
  $g_v\leftarrow\sum_{(v,w)\in E(G_x)}c_{wv}^+g_w$
  \tcp*[r]{downstream influence}
}
\ForEach{$v\preceq T$}{$\rho_{vT}\leftarrow g_vb_v^+(x)$\;}
$B_T^+\leftarrow\sum_{v\preceq T}\rho_{vT}$
\tcp*[r]{terminal uncertainty bound}
$h\leftarrow(B_T^+>\tau)\lor\textsc{HardControlFailure}(G_x)$
\tcp*[r]{Expose}
$L\leftarrow\textsc{RankPaths}(G_x,\{\rho_{vT}\})$
\tcp*[r]{Localize}
$\mathcal R\leftarrow\textsc{Route}(L)$, where $\mathcal R(v)=\{w:v\rightsquigarrow w\rightsquigarrow T\}$\;
$e\leftarrow\textsc{HardControlsPass}(G_x)\land[(B_T^+\le\tau)\lor\textsc{InvariantAction}(G_x,\mathcal S^\star)]$\;
\eIf{$e$}{
  $a\leftarrow\textsc{Release}$\;
}{
  $a\leftarrow\Pi.\textsc{FailAction}$\;
}
$C_x\leftarrow\{a,h,B_T^+,\rho_{vT},L,\mathcal R,\text{versions},\text{hashes}\}$\;
\If{node $v$ is repaired}{\textsc{Invalidate}$(\mathcal R(v))$; \textsc{Rerun}$(\mathcal R(v))$\;}
\Return{$(a,C_x)$}\;
\end{algorithm}

Table~\ref{tab:main-study-design} gives the four-stage main design.  Appendix~\ref{app:secondary} summarizes the secondary transfer and limitation studies; complete populations, splits, prompts, thresholds and provider outputs remain frozen in the anonymous archive.

\paragraph{Financial robustness audit.}
The final partition spans April 2025--July 2026.  With thresholds unchanged, SSEG accepts 65.0\%, 45.0\%, 42.1\% and 47.4\% across four chronological blocks and has no false acceptance in any block.  Over all 39 material-discrepancy cases, zero observed false acceptances imply a one-sided 95\% Clopper--Pearson upper bound of 7.4\%.  A paired circular moving-block bootstrap over chronologically ordered dates gives SSEG acceptance-gain intervals of [.269,.526] over terminal entropy and [.295,.538] over generic input drift at block length eight; lengths four and twelve give the same positive conclusion.  Multiplying the frozen SSEG threshold by .75 leaves 47.4\% acceptance and no false acceptance; multiplying it by 1.25 gives 51.3\% acceptance and one false acceptance.  These checks describe the original discrepancy gate, but a direct terminal-drift score identifies a label defined by that same drift perfectly and is therefore not an independent comparator.  The main text corrects this limitation with a separate reference-state audit and makes no superiority claim over direct terminal monitoring.  All block-bootstrap and threshold-sensitivity checks were computed after the final partition was opened and are descriptive, not additional confirmatory endpoints.

\section{Expanded measurement and governance construction}
\label{app:construction}

The purpose of this section is to define the detailed SSEG measurement, dependence and governance construction.  Appendix~\ref{app:protocols} gives the operational algorithm---what is fitted, frozen, populated and returned.  This section builds on it by specifying the probability laws, phrase measurements, joint-dependence set and robust governance objective that give those algorithmic objects their mathematical meaning.

For the detailed SSEG construction, write $\lambda_{\rm ret}$ for retrieval and ranking settings, $\lambda_{\rm pres}$ for prompt presentation, $\lambda_{\rm model}$ for model and decoding settings, $\lambda_{\rm eval}$ for evaluator settings and $\lambda_{\rm gov}$ for the governance rule.  Let $C$ be the complete context, $\bm S$ the retained passages and $t_{1:m}$ a complete continuation of $m$ tokens.  For pinned model parameters $\theta$, the continuation law factorizes as
\begin{equation}
Q_\theta(t_{1:m}\mid C,\bm S)=\prod_{j=1}^{m}K_j(t_j\mid C,\bm S,t_{<j},\lambda_{\rm model}). \label{eq:tokenfactor}
\end{equation}
For prespecified measurable phrase events $\mathcal P=(E_1,\ldots,E_d)$, retain
\begin{equation}
M_{\mathcal P}\{Q(\cdot\mid C,\bm S)\}=\bigl(Q(E_1\mid C,\bm S),\ldots,Q(E_d\mid C,\bm S)\bigr), \label{eq:phrase}
\end{equation}
including uncaptured expressed mass rather than silently renormalizing it.  If $\mathcal G_0$ is a prespecified family of information-preserving context transformations, $B(C)=M_{\mathcal P}\{Q(\cdot\mid C,\bm S)\}$ and $d_B$ is the declared metric, the stability requirement is
\begin{equation}
\sup_{g\in\mathcal G_0}\sup_{C\in D_g}d_B\{B(gC),B(C)\}\le\varepsilon. \label{eq:lexicalstability}
\end{equation}

For claim units $\bm U$, retained passages $\bm S$ and claim--source relations $\bm H$, the source-preserving graph retains
\begin{equation}
K_R(d\bm s\mid\bm y,\lambda_{\rm ret})K_U(d\bm u\mid C,\bm s,\lambda_{\rm pres},\lambda_{\rm model})\Pi_H(d\bm h\mid\bm u,\bm s,\lambda_{\rm eval})K_T(dz\mid\bm u,\bm h,\lambda_{\rm gov}). \label{eq:provenancefactor}
\end{equation}
Separate source marginals do not identify their dependence, so the compatible joint-law set $\mathfrak C$ is retained until the governance node applies declared loss:
\begin{equation}
d^*\in\arg\min_{d\in\mathcal D}\ \sup_{\Pi_H\in\mathfrak C}\E_{\Pi_H}[\ell(d,Z_T)]. \label{eq:governanceloss}
\end{equation}
Controlled channel changes compare one recorded setting at a time while holding the remaining configuration fixed.  Such a contrast is causal only when the intervention is randomized or otherwise identified.

\begin{corollary}[Robust certificate under declared graph uncertainty]
\label{cor:robust}
Let an acyclic supergraph $\overline G$ contain the true graph.  If $b_v\le b_v^+$ and $c_{vp}\le c_{vp}^+$ on $\overline G$, then $\mathcal W_T\{\mathbb P(Z_T^{\mathrm{op}}\in\cdot),\mathbb P(Z_T^0\in\cdot)\}\le B_T^+:=\sum_{v\preceq T}\Gamma_{vT}^+b_v^+$, where $\Gamma_{vT}^+$ uses the upper edge influences.
\end{corollary}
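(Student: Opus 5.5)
The plan is to reduce Corollary~\ref{cor:robust} to Theorem~\ref{thm:graph} applied on the supergraph $\overline G$, and then use monotonicity of the path-sum. The true graph $G$ is a subgraph of $\overline G$: same (or fewer) nodes, fewer edges. We regard the true workflow as a workflow on $\overline G$ whose kernels $K_v^0$ and $K_v^{\mathrm{op}}$ ignore the coordinates of any parent $p$ with $(p,v)\in\overline G\setminus G$. Any node of $\overline G$ absent from $G$ gets a trivial (constant) output with $b_v=0$. Under this embedding, the local defects in \eqref{eq:defect} are unchanged, since a supremum over a larger parent tuple of a function that does not depend on the extra coordinates is the same supremum. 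Likewise, \eqref{eq:influence} holds on $\overline G$ with $c_{vp}=0$ on the spurious edges. Hence the hypotheses of Theorem~\ref{thm:graph} hold on $\overline G$ with coefficients $c_{vp}$ (extended by zero) and defects $b_v$. The theorem gives $\mathcal W_T\le\sum_{v\preceq_{\overline G}T}\overline\Gamma_{vT}b_v$, where $\overline\Gamma$ is computed on $\overline G$; it agrees with the true $\Gamma_{vT}$ because every path using a spurious edge contributes a zero factor.

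Next comes the monotone comparison. Each path product $\prod_j c_{n_jn_{j-1}}$ is a product of nonnegative numbers, and each factor is at most $c^+_{n_jn_{j-1}}$. The product is therefore at most the corresponding product of upper influences. Summing over the finitely many paths of the acyclic $\overline G$ gives $\overline\Gamma_{vT}\le\Gamma^+_{vT}$, with $\Gamma^+_{TT}=1$ matching. Since $0\le b_v\le b_v^+$, we obtain termwise $\overline\Gamma_{vT}b_v\le\Gamma^+_{vT}b_v^+$. Summing over $v\preceq T$ in $\overline G$, which contains all ancestors in $G$, yields $B_T^+$.

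The main obstacle is the first step: checking that inserting spurious edges does not invalidate the coupling argument behind Theorem~\ref{thm:graph}. This amounts to verifying that the joint DAG law on $\overline G$ induced by parent-ignoring kernels has the same $T$-marginal as the true law, both operationally and in reference. That holds because the kernels factor identically when their dependence on the extra parents is trivial, and acyclicity of $\overline G$ keeps the factorization well defined. A second subtlety is that the hypotheses $b_v\le b_v^+$ and $c_{vp}\le c_{vp}^+$ should be read for the true quantities on every node and edge of $\overline G$, including zero on absent edges. Once stated this way, the rest is the monotonicity calculation above.
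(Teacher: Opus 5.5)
Your proposal is correct and is essentially the paper's proof: give zero influence to each edge of $\overline G$ absent from the true graph, note that the path sums are unchanged, and then use nonnegativity to get $\Gamma_{vT}\le\Gamma^+_{vT}$ and $\Gamma_{vT}b_v\le\Gamma^+_{vT}b_v^+$. The one difference is that the paper applies Theorem~\ref{thm:graph} directly on the true graph and embeds only the path sums in $\overline G$, so the step you flag as the main obstacle (re-checking the coupling argument on $\overline G$ with parent-ignoring kernels) is unnecessary, although your verification of it is sound.
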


If remediation on $S\subseteq V$ reduces $b_v$ to $\widetilde b_v$ without changing the qualified graph or influence bounds, then
\begin{equation}
\widetilde B_T=B_T-\sum_{v\in S}\Gamma_{vT}(b_v-\widetilde b_v)\le B_T. \label{eq:remediation}
\end{equation}
This reduces the certified upper bound; it is not a causal claim about realized error.  Operationally, one declares the trace graph and loss, obtains simultaneous local allowances on qualification data, freezes gates and actions, ranks $\Gamma_{vT}^+b_v^+$ at evaluation and abstains whenever telemetry is missing or the action changes across the envelope.

\section{Proof of the path certificate}
\label{app:proof}

The purpose of this section is to prove the pathwise uncertainty certificate and its robust extension under declared graph uncertainty.
\begin{proof}[Proof of Theorem~\ref{thm:graph}]
Fix a topological ordering $v_1,\ldots,v_{|V|}$ of the DAG and let $\delta>0$.  The proof proceeds recursively in that order.

\emph{Step 1: couple the root nodes.}
If $v$ is a root, its operational and reference laws are $K_v^{\mathrm{op}}$ and $K_v^0$.  Choose a coupling $(Z_v^{\mathrm{op}},Z_v^0)$ whose expected cost is within $\delta/|V|$ of their Wasserstein distance.  By the definition of $b_v$,
\[
q_v^{(\delta)}:=\E[d_v(Z_v^{\mathrm{op}},Z_v^0)]
\le b_v+\delta/|V|.
\]

\emph{Step 2: compare the two conditional laws at a non-root node.}
Suppose the parent pairs $(Z_p^{\mathrm{op}},Z_p^0)$ have already been coupled for every $p\in\pa(v)$.  Conditional on realized parent values $(z^{\mathrm{op}},z^0)$, insert the intermediate law $K_v^0(\cdot\mid z^{\mathrm{op}})$.  The triangle inequality for $\mathcal W_v$ gives
\begin{align*}
&\mathcal W_v\!\left\{K_v^{\mathrm{op}}(\cdot\mid z^{\mathrm{op}}),
K_v^0(\cdot\mid z^0)\right\}\\
&\quad\le
\mathcal W_v\!\left\{K_v^{\mathrm{op}}(\cdot\mid z^{\mathrm{op}}),
K_v^0(\cdot\mid z^{\mathrm{op}})\right\}
+\mathcal W_v\!\left\{K_v^0(\cdot\mid z^{\mathrm{op}}),
K_v^0(\cdot\mid z^0)\right\}\\
&\quad\le b_v+\sum_{p\in\pa(v)}c_{vp}
d_p(z_p^{\mathrm{op}},z_p^0),
\end{align*}
where the final inequality uses \eqref{eq:defect} and \eqref{eq:influence}.

\emph{Step 3: extend the joint coupling through node $v$.}
For each realized parent pair, choose a conditionally $\delta/|V|$-optimal coupling of the two child laws.  Standard Borel state spaces and lower-semicontinuous costs permit a measurable $\delta$-optimal selection, so its conditional expected cost can be integrated over the already constructed parent coupling.  This yields
\begin{align*}
q_v^{(\delta)}
&\le b_v+\sum_{p\in\pa(v)}c_{vp}
\E[d_p(Z_p^{\mathrm{op}},Z_p^0)]+\delta/|V|\\
&=b_v+\sum_{p\in\pa(v)}c_{vp}q_p^{(\delta)}+\delta/|V|.
\end{align*}
Repeating this construction in topological order produces, for each fixed $\delta>0$, a joint coupling of all deployed and reference nodes satisfying, componentwise,
\[
\bm q^{(\delta)}\le \bm b+\Lambda\bm q^{(\delta)}+(\delta/|V|)\bm 1,
\]
where $\Lambda_{vp}=c_{vp}$ when $p\to v$ and $\Lambda_{vp}=0$ otherwise.

\emph{Step 4: solve the recursive inequality.}
In topological order, $\Lambda$ is strictly lower triangular.  Hence $\Lambda^{|V|}=0$ and
\[
(\bm I-\Lambda)^{-1}=\bm I+\Lambda+\cdots+\Lambda^{|V|-1}.
\]
Every entry of this inverse is nonnegative.  Therefore multiplication of the preceding inequality by $(\bm I-\Lambda)^{-1}$ preserves the componentwise order:
\[
\bm q^{(\delta)}\le(\bm I-\Lambda)^{-1}\{\bm b+(\delta/|V|)\bm 1\}.
\]

\emph{Step 5: identify the inverse with directed paths.}
For $m\ge1$, the entry $(\Lambda^m)_{Tv}$ is the sum, over all length-$m$ directed paths
$\gamma:v\rightsquigarrow T$, of the product of the edge influences along that path:
\[
(\Lambda^m)_{Tv}
=\sum_{\substack{\gamma:v\rightsquigarrow T\\|\gamma|=m}}
\prod_{(p,w)\in\gamma}c_{wp}.
\]
Summing over $m=0,\ldots,|V|-1$ therefore gives
$[(\bm I-\Lambda)^{-1}]_{Tv}=\Gamma_{vT}$, including the length-zero path when $v=T$.  Taking the terminal component of the preceding vector bound yields
\[
q_T^{(\delta)}\le B_T+(\delta/|V|)[(\bm I-\Lambda)^{-1}\bm 1]_T.
\]

\emph{Step 6: return from the constructed coupling to Wasserstein distance.}
The Wasserstein distance is the infimum of expected terminal costs over all couplings.  It is therefore no larger than $q_T^{(\delta)}$ for every $\delta>0$.  Taking the scalar limit $\delta\downarrow0$---without asserting convergence or existence of an exactly optimal joint coupling---gives
\[
\mathcal W_T\{\mathbb P(Z_T^{\mathrm{op}}\in\cdot),\mathbb P(Z_T^0\in\cdot)\}
\le B_T,
\]
which proves \eqref{eq:pathsum}.
\end{proof}

Equation~\eqref{eq:remediation} follows by applying \eqref{eq:pathsum} before and after replacing $b_v$ by $\widetilde b_v$ on $S$ and subtracting the two bounds.

\begin{proof}[Proof of Corollary~\ref{cor:robust}]
Apply Theorem~\ref{thm:graph} to the true graph to obtain $\mathcal W_T\le\sum_v\Gamma_{vT}b_v$.  Embed it in $\overline G$ by assigning zero influence to each supergraph edge absent from the true graph.  Every path product is nonnegative, so replacing each true influence by its upper bound and adding uncertain paths can only increase every path sum: $\Gamma_{vT}\le\Gamma_{vT}^+$.  Since also $0\le b_v\le b_v^+$,
\[
\sum_v\Gamma_{vT}b_v\le\sum_v\Gamma_{vT}^+b_v^+=B_T^+.
\]
Combining the two inequalities proves the claim.  If the true graph or a true local quantity lies outside the declared envelope, this comparison need not hold.
\end{proof}

\section{FinGovBench open-weight sealed confirmation}
\label{app:fingov-safety}

The purpose of this section is to document the sealed FinGovBench design, evaluation protocol, results and residual unsafe releases.

The confirmation population is generated and sealed after the release map has been trained on the earlier development population.  It contains 600 matched topology pairs.  Within a pair, the event and local node descriptions are held fixed while the affected component is either connected to or blocked from the governed action.  Sixty pairs contain non-compensable red-line failures and therefore require withholding in both arms.  The remaining pairs test whether the same measured local event changes the decision when its downstream path changes.  The public records, 1,200 case identifiers and preregistered criteria are frozen before model execution.  All 3,600 laws are then measured zero-shot with self-hosted GPT-OSS-20B at pinned revision \texttt{6cee5e81}; no benchmark fine-tuning or weight update is performed.  Predictions from all three methods are checksummed, and only then are individual labels opened.  Because the methods consume the same saved model laws, their comparison isolates the incremental effect of uncertainty aggregation and graph-aware governance.  The final gate is deterministic and externally frozen, but the study is not an independent-model audit: GPT-OSS supplies both the candidate decision and the local laws used to measure defects.  Table~\ref{tab:fingov-confirmation} also reports an unchanged-gate Qwen transfer on this population; a second independently sealed population remains future work.

As Table~\ref{tab:fingov-confirmation} shows, the GPT-OSS flat-UQ result illustrates why unsafe-release rate cannot be reported without coverage: it is safe largely because it releases almost nothing.  SSEG instead recovers every safe release while reducing the model-alone unsafe-release rate by 98.2\%.  Its safe-recall gain over flat-UQ is 98.9 points, with topology-pair bootstrap interval $[98.0,99.6]$.  All 120 red-line cases are withheld.  Eight unsafe releases remain, so the study fails the deliberately stringent criterion of zero observed unsafe release; the one-sided 95\% upper bound is 2.18\%.

Post-seal characterization does not alter the frozen predictions.  The residual set consists of four conflicting-evidence cases, two monitoring-drift cases, one evaluator-coupling case and one calculation error.  Every residual selects \texttt{NONE} under both equivalent prompts.  Compared with correctly withheld unsafe cases, these laws have higher \texttt{NONE} mass and winning margins, lower entropy and lower propagated auditable risk.  The graph has represented the consequential path, but the local language-model observation assigns it too little defect mass.  This distinguishes a graph failure from a measurement failure and gives a concrete target for further qualification without redefining the sealed endpoint.

\section{Open-weight checkpoints and compute record}
\label{app:model-compute}

The purpose of this section is to identify the model revisions, hardware, measurement workloads and compute limitations needed to interpret and reproduce the open-weight studies, including FinGovBench.

Table~\ref{tab:open-weight-versions} identifies every open-weight checkpoint used to produce a reported language-model measurement.  A revision prefix denotes the immutable Hugging Face commit recorded in the run manifest; the archive retains the complete 40-character revision.  ``Provider snapshot unavailable'' means that the named open-weight model was accessed through a hosted endpoint that did not expose the weight revision.  Such runs are reported as hosted-model limitation studies rather than bitwise-reproducible self-hosted measurements.

\begin{table}[H]
\centering
\scriptsize
\setlength{\tabcolsep}{3.5pt}
\caption{Open-weight model versions used in the experiments.}
\begin{tabular}{p{0.34\linewidth}p{0.20\linewidth}p{0.38\linewidth}}
\toprule
Checkpoint & Immutable revision & Reported use\\
\midrule
\url{openai/gpt-oss-20b} & \texttt{6cee5e81ee83} & FinGovBench, ALCE and complete-phrase audits\\
\url{Qwen/Qwen3.5-9B} & \texttt{c20223623576} & MAGIC and ALCE common-control scoring\\
\url{Qwen/Qwen3-8B} & \texttt{b968826d9c46} & FinGovBench cross-model transfer, phrase-equivalence and post-training studies\\
\href{https://huggingface.co/mistralai/Mistral-7B-Instruct-v0.3}{Mistral-7B-Instruct-v0.3} & \texttt{c170c708c41d} & MAGIC, ALCE, RAGTruth and HotpotQA scoring\\
\url{allenai/OLMo-2-1124-7B-Instruct} & \texttt{470b1fba1ae0} & MAGIC and ALCE common-control scoring\\
\url{unsloth/Meta-Llama-3.1-8B-Instruct} & \texttt{a2856192dd7c} & MAGIC and ALCE common-control scoring\\
\url{unsloth/gemma-2-9b-it} & \texttt{fc7d4737cda1} & MAGIC and ALCE common-control scoring\\
\url{selfrag/selfrag_llama2_7b} & \texttt{190261383b07} & HotpotQA multi-hop path study\\
\url{openai/gpt-oss-120b} via Together & provider snapshot unavailable & VitaminC production-code limitation studies\\
\url{Qwen/Qwen3.5-9B} via Together & provider snapshot unavailable & RAGTruth hosted replication\\
\bottomrule
\end{tabular}
\label{tab:open-weight-versions}
\end{table}

The sealed FinGovBench collection used four independent AWS \texttt{g5.xlarge} workers, each with one NVIDIA A10G GPU (24\,GiB device memory), one model replica and one 300-case shard.  The loader used Transformers \texttt{device\_map=auto} and \texttt{torch\_dtype=auto}; every case required three exhaustive complete-candidate laws.  The four saved logs report 900 newly computed laws per shard and measurement-loop elapsed times of 791.8, 862.5, 853.1 and 809.6 seconds; checkpoint loading took approximately 94--108 seconds per worker.  These are execution records, not a normalized throughput comparison.

The cross-model Qwen3-8B transfer used one AWS \texttt{g5.xlarge} worker with the same 24\,GiB A10G configuration.  It computed 3,600 candidate-event measurements with explicit residual mass for 1,200 cases in 1,117.4 seconds after model loading.  A label-free 25-case qualification preceded the full run; predictions were hashed before the existing confirmation labels were evaluated.  Mistral and OLMo qualification artifacts are retained as negative interface results rather than converted into release metrics.

The subsequent governance computation is lightweight once the laws are frozen.  A 500-case, 20-repeat reference profile on macOS 15.5 ARM with Python 3.13.9 used no external model calls: median per-case times were 0.0068\,ms for flat-UQ, 0.0313\,ms for the lexical rule and 0.0153\,ms for SSEG, with recorded peak Python allocations below 0.01\,MB.  These figures exclude model inference and measure only the release-rule implementation.  Earlier open-weight runs preserve checkpoint, prompt and output manifests but not a complete accelerator inventory; we therefore do not infer missing GPU models or report cross-experiment hardware speedups.

\section{Additional experiments and limitation studies}
\label{app:additional-experiments}
The purpose of this section is to report transfer studies, negative results and empirical limitations that test the breadth of SSEG without enlarging the headline claim.

The following experiments probe breadth and failure limitations but are not part of the headline claim.  They include prompt-equivalence, financial RAG, ALCE and RAGTruth provenance, code-repair and post-training studies.  Their results are retained to prevent selective reporting; weaker or failed comparisons are explicitly scoped.

\paragraph{Open-weight phrase audit.}
At each of 125 points on a $5^3$ evidence lattice, we score three complete phrase events under four presentations that preserve evidence and task.  The three axes are equity breadth, implied volatility and investment-grade credit-spread movement; each takes the ordered levels materially lower, slightly lower, unchanged, slightly higher and materially higher.  GPT-OSS-20B, Qwen3-8B and Mistral-7B use pinned revisions, native chat templates and terminators.  No correctness labels, semantic reduction or printed confidence enter the test.  Pairwise statistics use only measurements passing the frozen 80\% expressed-mass gate.

\begin{figure}[H]
\centering
\includegraphics[width=.98\linewidth]{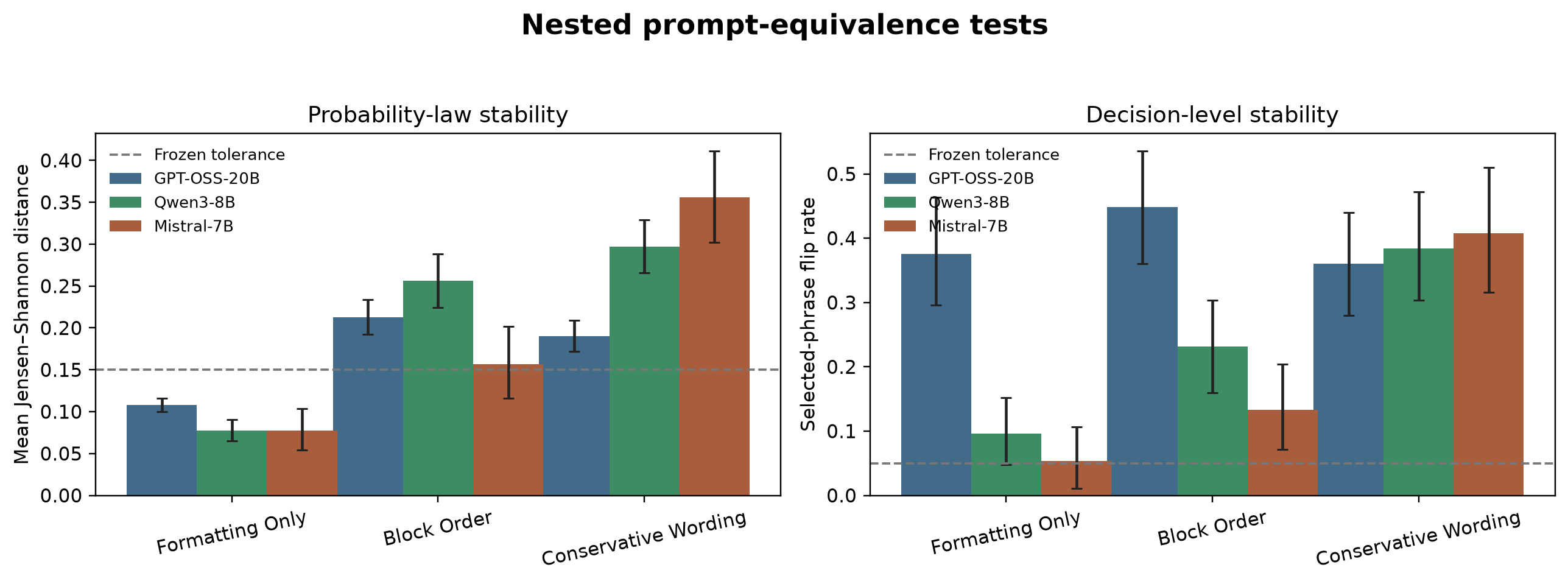}
\caption{Frozen information-equivalence tests on three open-weight architectures.  Bars show mean phrase-law distance and selected-phrase change rate; error bars are 95\% configuration-bootstrap intervals and dashed lines are tolerances.  Every relation fails at least one stability gate.}
\label{fig:nestedequiv}
\end{figure}

\paragraph{Finding 1: equivalent prompts can change the measured output.}
Figure~\ref{fig:nestedequiv} shows that the problem is not isolated to one model or one broad prompt rewrite.  GPT-OSS selected phrases change in 37.6--44.8\% of cases across formatting, block-order and conservative-wording tests.  Qwen changes in 9.6--38.4\%, and Mistral in 5.3--40.8\% among jointly qualified pairs; 42/500 Mistral measurements additionally abstain for insufficient phrase mass.  Canonical phrase laws disagree with GPT-OSS in 50.4\% of Qwen cases and 52.0\% of Mistral cases.  Exact GPT-OSS duplicates agree to machine precision.  \textbf{Takeaway: the audit diagnoses deterministic presentation and model dependence, not sampling noise, and a certificate cannot be transferred across model versions without retesting.}

\paragraph{Propagation and real-workflow diagnosis.}
In the controlled graph, the size of every injected local error and the strength with which it can affect later nodes are set in advance and therefore known exactly.  The real study uses a $2\times2$ design: two point-in-time retrieval rules crossed with two information-equivalent presentations.  An initial 60 dated financial queries are split 20/20/20 for calibration, validation and testing.  A disjoint 120-date set then requalifies deployment thresholds, which are tested once on all 78 remaining eligible dates.  These 198 dates exclude the initial study and produce 792 additional executions.  The contemporary model tests workflow stability, not a historical trading strategy.

\begin{figure*}[t]
\centering
\includegraphics[width=\linewidth]{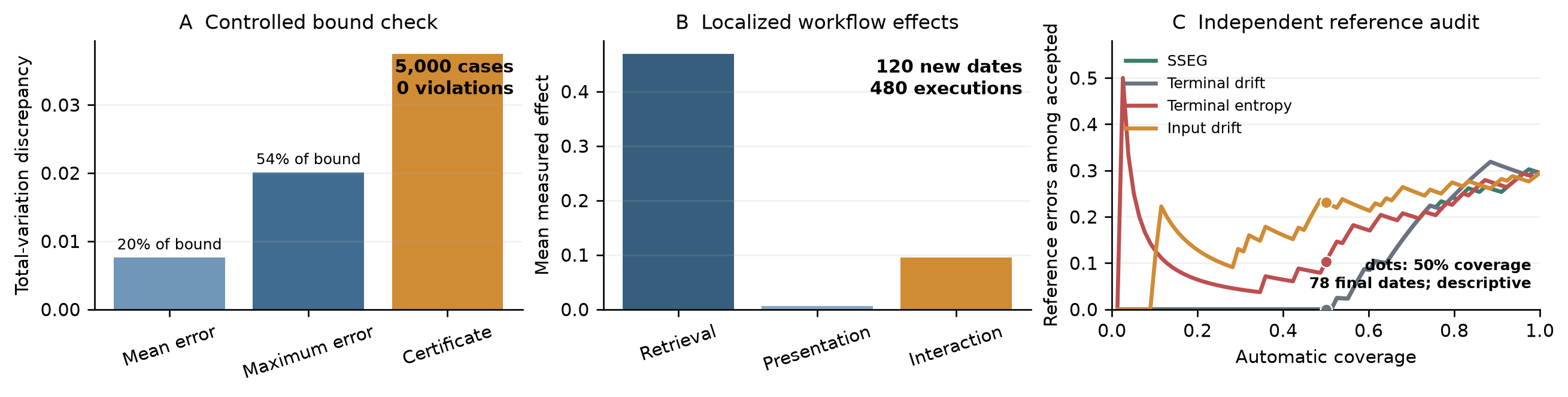}
\caption{Evidence ladder.  A: mean and maximum controlled discrepancies use 20\% and 54\% of the valid pathwise bound.  B: the 120-date requalification set separates retrieval, presentation and interaction effects.  C: a post-confirmatory descriptive audit uses a prespecified market-state reference distinct from the terminal-discrepancy endpoint.  On the final 78 dates, SSEG and equal-information direct terminal drift both have zero reference errors at 50\% coverage; SSEG retains the retrieval and presentation decomposition that terminal drift discards.  The reference is an arithmetic benchmark derived from dated evidence, not a human judgment or future market outcome.}
\label{fig:certificate-rag}
\end{figure*}

\paragraph{Finding 2: the pathwise bound holds in the controlled test.}
The controlled experiment has zero bound violations: terminal mean and maximum total-variation error are .00759 and .02009, below the .03750 certificate.  The mean slack factor is 4.94, so the uniform certificate is conservative but valid.  A 35,000-case misspecification stress test then gives the robust envelope 100\% coverage and zero false passes in every scenario, while the nominal false-pass rate reaches 42.2\% under combined stress.  With missing telemetry, robust automatic-action coverage falls to 4.9\% and review cost raises loss from .0802 to .0859.  These cases are deliberately controlled because certificate coverage cannot be observed without knowing the graph, local defects and reference law; they test the theorem under known truth and are not evidence about the frequency of organic production faults.  Cost-sensitive topology experiments appear later in the appendices.  \textbf{Takeaway: graph uncertainty must widen the certificate or trigger escalation; safety may reduce availability.}

\paragraph{Finding 3: the graph identifies which workflow component moved.}
On the 120-date requalification set, the crossed design separates mean retrieval, presentation and interaction effects of .469, .006 and .095.  Retrieval-edge and terminal discrepancies have $r=.99997$, and every discrepancy respects its pathwise bound.  These quantities derive from the same crossed phrase laws, so their near-perfect association is a structural consistency check, not independent predictive evidence.  In particular, a terminal-drift score perfectly identifies a label defined by that same drift; terminal entropy is not an adequate substitute comparator for such a claim.

We therefore retain the original frozen discrepancy analysis as a qualification audit, not evidence of superiority over terminal monitoring, and add a clearly labelled post-confirmatory check against the distinct reference state prespecified in the design.  Among 78 final dates, 23 mean four-cell decisions differ from that reference.  SSEG ranks these errors with AUROC .848, compared with .827 for equal-information terminal drift, .752 for terminal entropy and .649 for generic input drift.  SSEG and direct terminal drift both have zero reference errors at 50\% coverage, and their selective-risk difference is not significant under an eight-date moving-block bootstrap.  \textbf{Takeaway: SSEG matches direct terminal monitoring here while additionally identifying whether retrieval, presentation or their interaction generated the discrepancy; it does not establish better categorical prediction or market correctness.}

\paragraph{Organic source-order limitation.}
A post-run audit found that an earlier DRAGged into CONFLICTS \cite{cattan2025conflicts} pilot scored only the first token of multi-token natural-language labels; one model also exposed the reasoning rather than final-answer channel.  A subsequent audit found candidate-dependent left padding in the complete-sequence scorer, which could shift the shared prefix across alternatives.  All affected measurements are therefore excluded.  The corrected protocol right-pads candidates, verifies identical prefix positions and expressed mass, scores only the final-answer channel and must pass model-specific sentinels before a full run.  No CONFLICTS result enters the evidence ladder until that cross-model requalification completes.

\paragraph{Matched Self-RAG path-qualification study.}
The reviewer-motivated comparison asks whether retained graph structure adds governance information inside a fixed self-reflective generator rather than against a weaker generator.  We pin the released Self-RAG model and construct an explicit two-stage evidence beam for HotpotQA bridge questions \cite{selfrag2024,hotpot2018}.  Each trace retains five first-hop branches; the best two are expanded through four distinct second-hop passages, yielding 13 nodes and 11 root-to-leaf candidate paths.  A path is labelled safe only when its set of passage titles equals the two independently annotated supporting-document titles.  Answers and support titles are firewalled from feature construction.  Eight sentinels and all 384 evaluation traces pass the prespecified node-count, depth, identity and label-leakage audit.

The flat comparator is class-balanced logistic regression on terminal Self-RAG final, groundedness, relevance and accumulated scores.  SSEG receives those identical terminal measurements plus first-hop scores, minimum path groundedness and relevance, ordered parent--child score change, sibling and first-hop rank, and distinct-passage identity.  Five-fold question-grouped cross-validation on 288 previously opened questions fixes the feature set and regularization.  The evaluation population is then selected by a salted hash from eligible official HotpotQA distractor-development questions after excluding every development identifier.  It contains 384 questions, 4,224 paths and 595 complete support paths; every path from a question remains together in each of 5,000 paired bootstrap resamples.

SSEG raises path-level AUROC from .783 to .827 (paired interval $[.032,.056]$) and average precision from .331 to .392 ($[.027,.095]$).  At a frozen 10\% shortlist, precision rises from 36.3\% to 43.6\% ($[2.1,13.5]$ percentage points) and recall from 25.7\% to 30.9\% ($[1.5,9.6]$ points).  This is about a 20\% relative increase in complete support paths per reviewed candidate.  However, rank-first accuracy declines from 52.8\% to 47.5\% (difference interval $[-10.4,-0.5]$ points), and mean first-safe-path rank worsens from 2.18 to 2.39.  A selected two-hop path touches two of the 13 retained evidence nodes, so either ranker can localize subsequent inspection to 15.4\% of the trace; the graph's demonstrated advantage is selecting a better \emph{set} of paths within that budget.  It does not improve generation, answer accuracy or definitive path choice.

An earlier opened pilot is retained as a limitation rather than pooled with the confirmatory result.  On one-hop ASQA, SSEG did not improve answer-level selective error; on the first HotpotQA beam, answer-recovery AURC also failed to improve.  These failures motivated the estimand used above: a path graph should be evaluated against independently labelled complete evidence paths, while final-answer correctness remains a separate endpoint.  The frozen protocol, coefficients, generated traces, full audit and analysis are stored under \texttt{selfrag\_sseg\_path\_governance\_v3}.

\paragraph{Three/four-hop Self-RAG source-sensitivity study.}
We separately test whether retained path measurements identify evidence changes that materially alter a self-reflective model's response.  The released Self-RAG Llama-2-7B checkpoint is pinned at revision \texttt{1902613}.  Official MuSiQue train questions supply 80 development questions and the untouched development split supplies 160 evaluation questions, balanced between three and four hops.  For each question, the independently annotated supporting chain is paired with one condition per hop in which that source is replaced by the closest-length non-supporting paragraph from the same question.  Opaque condition identifiers and model inputs contain neither the fault index nor answer label.  This yields 280 development and 560 evaluation source-intervention pairs.  Eighteen sentinel conditions precede scaling; all 1,098 generated conditions preserve exact source order, depth and node identity, and all scores are finite and non-collapsed.

The terminal comparator receives 13 features: depth and clean, substituted and changed terminal final, groundedness, relevance and accumulated scores.  SSEG receives that exact vector as a prefix plus 32 frozen path features describing ordered level changes, absolute changes, bottlenecks and downstream accumulation.  Thus the comparison isolates retained path information rather than additional terminal telemetry.  Response distance is one minus normalized-token Jaccard similarity between matched clean and substituted outputs; its material threshold is the development 75th percentile (.5669).  On 560 untouched pairs, terminal versus SSEG average precision is .283 versus .538, AUROC .562 versus .766 and precision in the highest-risk quartile 25.7\% versus 53.6\%.  The 5,000-resample question-clustered AP difference is .254 with interval $[.162,.340]$.  A label-blind permutation audit gives mean null AP .249 and one-sided $p<.0005$.  At a strong-grounding threshold fixed by the development clean-node upper quartile, 19.8\% of substituted unsupported nodes still pass.

The exact-hop endpoint is null: local Self-RAG reaches 38.6\% and SSEG 38.4\%, with difference interval $[-3.5,3.4]$ points; within-one accuracy is significantly worse for SSEG.  The preregistered success rule therefore passes through response-sensitivity AP, not localization.  A post hoc positional audit shows why the two estimands diverge.  For three-hop chains, SSEG selects the first hop 164 times and the second only 22 times across 240 balanced interventions; corresponding counts are 122 and 52 across 320 four-hop interventions.  Yet second-hop substitutions have the largest mean response distance (.516 for three-hop and .506 for four-hop) and are localized in only 15.0\% and 20.0\% of cases.  The absolute-score localizer observes a single substituted trace, while the successful sensitivity model uses paired clean-versus-substituted changes across the complete path.  In a linear chain, a source perturbation may be absorbed locally and become visible only downstream, so maximal reflection defect and intervention location need not coincide.

\paragraph{Post-hoc paired edge-contrast pilot.}
The failure above isolates a measurement problem, not an absence of graph identity.  A traced DAG already records which source occupies each evidence edge; what was missing from the absolute-score localizer was a local defect measurement analogous to $b_v$ in Theorem~\ref{thm:graph}.  Before fitting a replacement method, we froze a diagnostic protocol using the original 80-question development split and 160-question evaluation split.  The paired-node comparator receives depth, normalized hop and clean-versus-substituted changes in final, groundedness, relevance and accumulated scores.  Paired-edge SSEG receives this exact ten-feature vector as a prefix plus parent and successor changes, incoming and outgoing edge contrasts, suffix propagation and normalized path contribution.  Neither model receives passage text, passage identity or the known intervention index.

Across the 560 evaluation interventions, absolute local Self-RAG localizes 38.6\%, paired-node contrasts localize 75.2\% and paired-edge SSEG localizes 88.4\%; the paired-edge gain over paired-node measurement is 13.2 points with a 5,000-resample question-clustered interval $[10.1,16.3]$.  Paired-edge within-one accuracy is 99.1\% and its mean reviewed fraction is .674, compared with .709 for paired node and .845 for absolute local scoring.  Development and evaluation identifiers are disjoint, every feature is finite, the paired-node vector is an exact prefix of the edge vector and a 2,000-permutation randomized-position audit gives $p<.0005$.  Thus the retained graph adds attribution information after the local contrast is measured.  Because this method was designed after inspecting the failed endpoint, these results are hypothesis-generating and are not used to revise the confirmatory claim.

\paragraph{Stateful tool-use limitation.}
ToolSandbox supplies executable tools, persistent state, on-policy dialogue and a milestone DAG over intermediate and final states \cite{toolsandbox2025}.  We freeze 48 executions from 12 multi-turn state-dependent scenario families and cross the base task with ten distractor tools, scrambled tool names and scrambled argument descriptions; the identical suite is run with \texttt{gpt-4.1-mini-2025-04-14} and \texttt{gpt-4o-mini-2024-07-18}, yielding 96 executions.  Forty contain at least one failed intermediate milestone.  A label-blind monitor using missing expected calls and order violations completely covers 75.0\% of faulty paths while routing 61.0\% of intermediate nodes to review.  The misses commonly invoke the expected tool with schema-valid but semantically wrong arguments, so tool presence does not establish tool correctness.  Once ToolSandbox's independently declared argument/state validator supplies a local contract alarm, SSEG routes all 40 failed paths while reviewing 71.3\% of intermediate nodes, 28.8\% fewer than exhaustive inspection.  This is post-alarm routing, not predictive detection: trace-only SSEG ties the flat contract monitor in both arms.  The study uses 503 agent calls, 236 simulated-user calls and 566.6 seconds of summed scenario wall time; deterministic offline graph scoring adds no model calls and takes under 0.5 milliseconds per case on the test machine.

\paragraph{Exploratory human-rated provenance analysis.}
We use 2,886 corrected human-rated ALCE claim--citation links \cite{alce2023}, split by question into 1,419 development and 1,467 untouched evaluation links.  Citation markers are removed from scored targets, control passages are token-length matched and repeated citations to the same sentence receive equal total weight.  Each cited passage is replaced while question, answer and scorer remain fixed.  SSEG records the resulting teacher-forced sentence-probability change; human judgments serve only as evaluation labels.

The graph also supplies a diagnostic decomposition.  An audit verified every target, control identifier and human label and replaced the earlier model-specific matching with one frozen common-control map.  The same source replacement is therefore used for every displayed model, although tokenizer-relative lengths necessarily differ.  Controls are selected without support labels and remain within the frozen development/evaluation partition; they do not import target outcomes.  The resulting five-model comparison holds target and control text fixed and remains a descriptive map of aligned-word uncertainty and source influence rather than causal attribution or field prevalence.  The prior Mistral output is excluded until its duplicated-BOS rerun completes.

\textbf{Where uncertainty enters.}  SSEG does not reduce provenance to one confidence score.  The source--claim edge retains the distribution of word-level probability changes under source substitution: its mean records overall sensitivity, its upper tail records locally strong dependence and its positive-word fraction records how broadly support extends across the claim.  Source-conditioned surprisal records residual language uncertainty, while disagreement with an entailment model or lexical overlap identifies a channel conflict.  On the held-out ALCE links, this decomposition raises rank-1 localization within affected questions from 58.7\% for the HHEM-plus-lexical flat comparator to 68.0\%, while global AUROC remains statistically tied.  The practical gain is therefore a source-specific diagnostic and revalidation action, not a claim of uniformly better screening.

\begin{table*}[t]
\centering
\footnotesize
\setlength{\tabcolsep}{4pt}
\caption{The main evidence ladder.  Each row answers a different part of the same governance claim; heterogeneous endpoints are not pooled.}
\begin{tabular}{>{\raggedright\arraybackslash}p{.17\textwidth}>{\raggedright\arraybackslash}p{.18\textwidth}>{\raggedright\arraybackslash}p{.40\textwidth}>{\raggedright\arraybackslash}p{.17\textwidth}}
\toprule
Question & Evaluation & Main result & Permitted conclusion\\
\midrule
Propagation & 35,000 known-truth stress cases & Robust envelope attained complete observed coverage and zero false passes under the tested misspecifications. & Conditional theorem verification, not field prevalence.\\
Workflow diagnosis & 78 final financial dates; 312 executions & SSEG and direct terminal drift both admit zero reference errors at 50\% coverage; SSEG retains typed path attribution. & Post-confirmatory diagnosis, not superior prediction or market correctness.\\
Conflict-path localization & 960 untouched MAGIC cases; 8,592 candidate paths & Frozen graph--lexical ranking raises rank-1 localization from 75.8\% to 80.3--82.3\% across five model families. & Independently labelled localization and reduced ranked search.\\
\bottomrule
\end{tabular}
\label{tab:transfer}
\end{table*}

\section{Additional claim limitations}
\label{app:limitations}

The purpose of this section is to delimit the statistical, causal and operational claims.  The 35,000 known-truth cases verify a conditional theorem, not production prevalence.  The financial study supports decomposition and bounds, not superior prediction; its independent-reference comparison is post-confirmatory.  ALCE supports source-reliance diagnosis, not causal attribution or an architecture ranking.  RAGTruth, software, retrieval-repair and ToolSandbox studies test transfer under their stated controls, not universal diagnosis or improvement.  Live traces may omit edges, tested models do not establish frontier-model transport, and interventions add unoptimized latency.  Semantic calibration and identification remain separate requirements \cite{dixonsemantic2026}.  \textbf{Stability is not correctness, and controlled sensitivity is not causal attribution without identified interventions.}

\renewcommand{\theHfigure}{A.\arabic{figure}}
\renewcommand{\theHtable}{A.\arabic{table}}
\section{Condensed secondary evidence and audit limitations}
\label{app:secondary}

The purpose of this section is to summarize transfer and failure-limitation evidence without enlarging the paper's confirmatory claim.  Table~\ref{tab:secondary-summary} states each population, result and limitation; frozen records and scripts remain in the anonymous archive.

\begin{table}[H]
\centering
\scriptsize
\setlength{\tabcolsep}{3pt}
\caption{Secondary evidence, retained for auditability but not pooled with the headline ladder.}
\label{tab:secondary-summary}
\begin{tabular}{p{.18\linewidth}p{.22\linewidth}p{.50\linewidth}}
\toprule
Study & Population & Result and permitted interpretation\\
\midrule
RCAEval & 375 untouched incidents & 82.1\% top-one localization; tied with equal-telemetry flat rules, so no superiority claim.\\
SWE-rebench & 1,614 issues; 216 unseen repositories & At 25\% coverage, error falls from 4.0\% to 2.4\%; the interval touches zero.\\
Retrieval repair & 7,500 controlled two-fault pipelines & Targeted repair restores affected descendants under injected faults; organic diagnosis is untested.\\
ToolSandbox & 96 executions; two agent models & Retained paths reduce post-alarm review by 28.8\%; trace-only SSEG does not beat the flat alarm.\\
Self-RAG & 384 questions; 560 source interventions & Paths improve shortlisting and sensitivity detection; exact-hop localization needs paired edge contrasts.\\
RAGTruth and transfer & 2,000 responses plus two holdouts & Modest word-level gain; summary-level and incomplete-law transfer gates fail.\\
\bottomrule
\end{tabular}
\end{table}

\subsection{Continuation laws, provenance and ALCE protocol}
\label{app:lexical}
\label{app:provenance}
\label{app:alce}
Candidate phrases are teacher-forced under one recorded prefix; residual mass is retained and low-mass cases abstain.  Runtime provenance records source, span, time, hash and downstream transfers, but does not infer training-data attribution.  Controlled substitutions hold the question, answer and scorer fixed; without identification, dependence is diagnostic rather than causal.  ALCE's 2,886 human-rated links are split by question into 1,419 development and 1,467 evaluation links.  Scored targets omit citation markers; label-blind controls are token-length matched; repeated citations share weight; the control map is frozen across five models; and intervals cluster by question.  The result measures and routes source reliance, not the truth of a human-supported citation.

\section{Observation profiles and claim limitations}
\label{app:observation-profiles}

The purpose of this section is to distinguish what can be observed under black-box, hosted open-weight and self-hosted instrumentation profiles, and what may validly be inferred from each.

Frontier black-box services expose generated text and sometimes truncated top-$k$ alternatives; missing mass must remain explicit.  Hosted open-weight endpoints may expose candidate scores without internal activations.  Self-hosting permits full logits, hidden-state and routing diagnostics.  These extra observables refine the graph but do not become causal explanations without identified interventions.  No experiment claims access to hidden beliefs, document-level pretraining attribution or transport beyond the frozen graphs, models and intervention families.

\begin{table}[H]
\centering
\small
\caption{Capability-aware observation profiles.}
\label{tab:observation-profiles}
\begin{tabular}{p{.21\linewidth}p{.28\linewidth}p{.42\linewidth}}
\toprule
Profile & Observable measurement & Claim limitation\\
\midrule
Frontier black box & Generated text, returned top-$k$ alternatives and residual mass & Partial phrase law unless every declared alternative is exposed; no internal attribution.\\
Hosted open weight & Pinned endpoint and candidate or token log probabilities & Complete candidate law only when every complete phrase is scored under the same declared prefix.\\
Self-hosted instrumented & Full logits and candidate laws; optional hidden-state and routing summaries & Deeper diagnostics are available, but internal signals are not causal explanations without identification.\\
\bottomrule
\end{tabular}
\end{table}

The audit record includes graph topology, component and model revisions, tokenizer and termination rule, prompts, phrase family, retrieved-document identifiers and vintages, metric, thresholds, split hashes and unavailable channels.  Runtime source provenance and model-training provenance are distinct.  The former may be verified through source identifiers, accessed spans, timestamps and content hashes.  The latter is only partially observed when the base training corpus is undisclosed; open weights do not retroactively identify document-level training attribution.

\ifdefined\supplementonly

\fi
\fi


\begin{thebibliography}{99}
\bibitem{blackwell1953} D. Blackwell. Equivalent comparisons of experiments. \emph{Annals of Mathematical Statistics}, 24(2):265--272, 1953.
\bibitem{frechet1951} M. Fr\'echet. Sur les tableaux de corr\'elation dont les marges sont donn\'ees. \emph{Annales de l'Universit\'e de Lyon, Section A}, 14:53--77, 1951.
\bibitem{csiszar1967} I. Csisz\'ar. Information-type measures of difference of probability distributions and indirect observations. \emph{Studia Scientiarum Mathematicarum Hungarica}, 2:299--318, 1967.
\bibitem{dobrushin1956} R. L. Dobrushin. Central limit theorem for nonstationary Markov chains. I. \emph{Theory of Probability and Its Applications}, 1(1):65--80, 1956; and II, 1(4):329--383, 1956.
\bibitem{makur2020} A. Makur and L. Zheng. Comparison of contraction coefficients for $f$-divergences. \emph{Problems of Information Transmission}, 56:103--156, 2020.
\bibitem{ovadia2019} Y. Ovadia et al. Can you trust your model's uncertainty? Evaluating predictive uncertainty under dataset shift. \emph{NeurIPS}, 2019.
\bibitem{geifman2019} Y. Geifman and R. El-Yaniv. SelectiveNet: A deep neural network with an integrated reject option. \emph{Proceedings of ICML}, 97:2151--2159, 2019.
\bibitem{angelopoulos2024} A. N. Angelopoulos, S. Bates, A. Fisch, L. Lei, and T. Schuster. Conformal risk control. \emph{ICLR}, 2024.
\bibitem{kadavath2022} S. Kadavath et al. Language models (mostly) know what they know. arXiv:2207.05221, 2022.
\bibitem{kuhn2023} L. Kuhn, Y. Gal, and S. Farquhar. Semantic uncertainty: linguistic invariances for uncertainty estimation in natural language generation. \emph{ICLR}, 2023.
\bibitem{heo2025} J. Heo, M. Xiong, C. Heinze-Deml, and J. Narain. Do LLMs estimate uncertainty well in instruction-following? \emph{ICLR}, 2025.
\bibitem{liu2025} G. K.-M. Liu, G. Yona, A. Caciularu, I. Szpektor, T. G. J. Rudner, and A. Cohan. MetaFaith: Faithful natural language uncertainty expression in LLMs. \emph{Proceedings of EMNLP}, pages 29600--29644, 2025.
\bibitem{kalai2025} A. T. Kalai, O. Nachum, S. S. Vempala, and E. Zhang. Why language models hallucinate. arXiv:2509.04664, 2025.
\bibitem{fu2025} Y. Fu, X. Wang, H. Zhang, Y. Tian, and J. Zhao. Deep Think with Confidence. \emph{International Conference on Learning Representations}, 2026.
\bibitem{nakkiran2026} P. Nakkiran et al. Trained on tokens, calibrated on concepts: The emergence of semantic calibration in LLMs. \emph{ICLR}, 2026.
\bibitem{ye2026} Z. Ye et al. Uncertainty quantification for LLM function-calling. arXiv:2604.22985, 2026.
\bibitem{lewis2020} P. Lewis et al. Retrieval-augmented generation for knowledge-intensive NLP tasks. \emph{NeurIPS}, 2020.
\bibitem{paglieri2025} D. Paglieri et al. BALROG: Benchmarking agentic LLM and VLM reasoning on games. \emph{ICLR}, 2025.
\bibitem{toolsandbox2025} J. Lu, T. Holleis, Y. Zhang, B. Aumayer, F. Nan, H. Bai, S. Ma, S. Ma, M. Li, G. Yin, et al. ToolSandbox: A stateful, conversational, interactive evaluation benchmark for LLM tool use capabilities. \emph{Findings of NAACL}, pages 1160--1183, 2025.
\bibitem{provagent2025} R. Souza, A. Gueroudji, S. DeWitt, D. Rosendo, T. Ghosal, R. Ross, P. Balaprakash, and R. Ferreira da Silva. PROV-AGENT: Unified provenance for tracking AI agent interactions in agentic workflows. \emph{2025 IEEE International Conference on eScience}, pages 467--473, 2025.
\bibitem{raji2020} I. D. Raji, A. Smart, R. N. White, M. Mitchell, T. Gebru, B. Hutchinson, J. Smith-Loud, D. Theron, and P. Barnes. Closing the AI accountability gap: Defining an end-to-end framework for internal algorithmic auditing. \emph{Proceedings of the 2020 Conference on Fairness, Accountability, and Transparency}, pages 33--44, 2020.
\bibitem{mitchell2019} M. Mitchell, S. Wu, A. Zaldivar, P. Barnes, L. Vasserman, B. Hutchinson, E. Spitzer, I. D. Raji, and T. Gebru. Model cards for model reporting. \emph{Proceedings of the Conference on Fairness, Accountability, and Transparency}, pages 220--229, 2019.
\bibitem{arnold2019} M. Arnold, R. K. E. Bellamy, M. Hind, S. Houde, S. Mehta, A. Mojsilovi\'c, R. Nair, K. N. Ramamurthy, A. Olteanu, D. Piorkowski, D. Reimer, J. Richards, J. Tsay, and K. R. Varshney. FactSheets: Increasing trust in AI services through supplier's declarations of conformity. \emph{IBM Journal of Research and Development}, 63(4/5):6:1--6:13, 2019.
\bibitem{nistairmf2023} E. Tabassi. Artificial Intelligence Risk Management Framework (AI RMF 1.0). NIST AI 100-1, National Institute of Standards and Technology, 2023.
\bibitem{arcjsd2025} R. Li et al. Attributing response to context: A Jensen--Shannon divergence driven mechanistic study of context attribution in retrieval-augmented generation. \emph{International Conference on Learning Representations}, 2026.
\bibitem{sgic2025} G. Chen et al. SGIC: A self-guided iterative calibration framework for RAG. \emph{Proceedings of the 63rd Annual Meeting of the Association for Computational Linguistics (Volume 1: Long Papers)}, pages 28357--28370, 2025.
\bibitem{bergen2024} D. Rau et al. BERGEN: A benchmarking library for retrieval-augmented generation. \emph{Findings of EMNLP}, 2024.
\bibitem{ares2024} J. Saad-Falcon et al. ARES: An automated evaluation framework for retrieval-augmented generation systems. \emph{NAACL}, 2024.
\bibitem{ragas2024} S. Es, J. James, L. Espinosa Anke, and S. Schockaert. RAGAs: Automated evaluation of retrieval augmented generation. \emph{Proceedings of EACL: System Demonstrations}, pages 150--158, 2024.
\bibitem{factscore2023} S. Min et al. FActScore: Fine-grained atomic evaluation of factual precision in long-form text generation. \emph{Proceedings of EMNLP}, pages 12076--12100, 2023.
\bibitem{selfrag2024} A. Asai et al. Self-RAG: Learning to retrieve, generate, and critique through self-reflection. \emph{ICLR}, 2024.
\bibitem{halueval2023} J. Li et al. HaluEval: A large-scale hallucination evaluation benchmark for large language models. \emph{Proceedings of EMNLP}, pages 6449--6464, 2023.
\bibitem{swerebench2025} I. Badertdinov, A. Golubev, M. Nekrashevich, A. Shevtsov, S. Karasik, A. Andriushchenko, M. Trofimova, D. Litvintseva, and B. Yangel. SWE-rebench: An automated pipeline for task collection and decontaminated evaluation of software engineering agents. arXiv:2505.20411, 2025.
\bibitem{scifact2020} D. Wadden et al. Fact or fiction: Verifying scientific claims. \emph{EMNLP}, 2020.
\bibitem{hotpot2018} Z. Yang et al. HotpotQA: A dataset for diverse, explainable multi-hop question answering. \emph{EMNLP}, 2018.
\bibitem{musique2022} H. Trivedi, N. Balasubramanian, T. Khot, and A. Sabharwal. MuSiQue: Multihop questions via single-hop question composition. \emph{Transactions of the Association for Computational Linguistics}, 10:539--554, 2022.
\bibitem{pham2025rcaeval} L. Pham, H. Zhang, H. Ha, F. Salim, and X. Zhang. RCAEval: A benchmark for root cause analysis of microservice systems with telemetry data. \emph{Companion Proceedings of the ACM Web Conference}, pages 777--780, 2025.
\bibitem{alce2023} T. Gao, H. Yen, J. Yu, and D. Chen. Enabling large language models to generate text with citations. \emph{Proceedings of the 2023 Conference on Empirical Methods in Natural Language Processing}, pages 6465--6488, 2023.
\bibitem{vitaminc2021} T. Schuster, A. Fisch, and R. Barzilay. Get your Vitamin C! Robust fact verification with contrastive evidence. \emph{Proceedings of NAACL}, pages 624--643, 2021.
\bibitem{ragtruth2024} C. Niu et al. RAGTruth: A hallucination corpus for developing trustworthy retrieval-augmented language models. \emph{Proceedings of ACL}, pages 10862--10878, 2024.
\bibitem{hhem2024} Vectara. HHEM-2.1-Open hallucination evaluation model. Hugging Face model artifact, DOI:10.57967/hf/3240, 2024.
\bibitem{qwen3} A. Yang et al. Qwen3 technical report. arXiv:2505.09388, 2025.
\bibitem{mistral7b} A. Q. Jiang et al. Mistral 7B. arXiv:2310.06825, 2023.
\bibitem{lee2025magic} J. Lee, K. Lee, and T. Kim. MAGIC: A multi-hop and graph-based benchmark for inter-context conflicts in retrieval-augmented generation. \emph{Findings of EMNLP}, 2025.
\bibitem{cattan2025conflicts} A. Cattan et al. DRAGged into CONFLICTS: Detecting and addressing conflicting sources in search-augmented LLMs. arXiv:2506.08500, 2025.
\bibitem{dixonsemantic2026} M. Dixon. Calibrating semantic uncertainty from observable language-model probabilities. arXiv:2607.17447, 2026.
\bibitem{fingovbench2026} M. F. Dixon. FinGovBench: A closed-loop benchmark for financial AI governance. BeliefLens release candidate 1.0.0-rc4, 2026. \url{https://huggingface.co/datasets/BeliefLens/FinGovBench}.
\end{thebibliography}
\end{document}